\title{Multi-stage Convex Relaxation for Feature Selection}
\author{Tong Zhang \\
  Statistics Department \\
  Rutgers University \\
  Piscataway, NJ   08854 \\
  \texttt{\small tzhang@stat.rutgers.edu}
}
\date{}
\newtheorem{lemma}{Lemma}
\newtheorem{theorem}{Theorem}
\newtheorem{definition}{Definition}
\newtheorem{assumption}{Assumption}
\newcommand{\bx}{{\mathbf x}}
\newcommand{\by}{{\mathbf y}}
\newcommand{\bw}{{\mathbf w}}
\newcommand{\bv}{{\mathbf v}}
\newcommand{\bu}{{\mathbf u}}
\newcommand{\rE}{{\mathbf E}}
\newcommand{\sgn}{{\mathrm{sgn}}}
\newcommand{\Fr}{{\mathrm{supp}}}
\newcommand{\R}{{\mathbb R}}
\newcommand{\BlackBox}{\rule{1.5ex}{1.5ex}}  
\newenvironment{proof}{\par\noindent{\bf Proof\ }}{\hfill\BlackBox\\[2mm]}
\begin{document}

\maketitle

\begin{abstract}
A number of recent work studied the effectiveness of feature selection using Lasso. 
It is known that under the restricted isometry properties (RIP), Lasso does not generally lead 
to the exact recovery of the set of nonzero coefficients, due to the looseness of convex relaxation.
This paper considers the feature selection property of nonconvex regularization, where
the solution is given by a multi-stage convex relaxation scheme.
Under appropriate conditions, we show that the local solution obtained by this procedure recovers
the set of nonzero coefficients without suffering from the bias of Lasso relaxation, which complements
parameter estimation results of this procedure in \cite{Zhang09-multistage}.
\end{abstract}

\section{Introduction}

We consider the linear regression problem, where we observe a set of
input vectors $\bx_1, \ldots, \bx_n \in R^p$, with corresponding 
desired output variables $y_1, \ldots, y_n$. 
In a statistical linear model, it is common to assume that there exists a target coefficient vector
$\bar{\bw} \in R^p$ such that 
\begin{equation}
 y_i = \bar{\bw}^\top \bx_i  + \epsilon_i
\qquad (i=1,\ldots,n) , \label{eq:noise-def}
\end{equation}
where $\epsilon_i$ are zero-mean independent random noises 
(but not necessarily identically distributed). 
Moreover, we assume that the target vector $\bar{\bw}$ is sparse. That is,
$\bar{k}=\|\bar{\bw}\|_0$ is small. Here we use the standard notation
\[
\Fr(\bw)= \{j: \bw_j \neq 0\}  \qquad
\|\bw\|_0 = |\Fr(\bw)| 
\]
for any vector $\bw \in R^p$.

This paper focuses on the feature selection problem, where we are interested in estimating the set of
nonzero coefficients $\Fr(\bar{\bw})$ (also called support set).
Let $\by$ denote the vector of $[y_i]$ and $X$ be the $n \times d$ matrix
with each row a vector $\bx_i$. The standard statistical method is subset selection ($L_0$ regularization), which
computes the following estimator 
\begin{equation}
\hat{\bw}_{L_0} = \arg\min_{\bw \in R^p} \|X \bw - \by\|_2^2 
\qquad \text{subject to } \|\bw\|_0 \leq k , \label{eq:L0} 
\end{equation}
where $k$ is a tuning parameter. This method is arguably a natural method for feature selection because 
if noise $\epsilon_i$ are iid Gaussian random variables, then (\ref{eq:L0}) can be regarded as a Bayes procedure
with an appropriately defined sparse prior over $\bw$. 
However, because the optimization problem in (\ref{eq:L0}) 
is nonconvex, the global solution of this problem cannot be efficiently computed. 
In practice, one can only find an approximate solution of (\ref{eq:L0}). The most popular approximation to $L_0$
regularization is the $L_1$ regularization method 
which is often referred to as Lasso \cite{Tib96}:
\begin{equation}
\hat{\bw}_{L_1} = \arg\min_{\bw \in R^p} \left[ \frac{1}{n} \|X \bw - \by\|_2^2 + \lambda \|\bw\|_1 \right] ,
\label{eq:L1}
\end{equation}
where $\lambda>0$ is an appropriately chosen regularization parameter. 

The global optimum of (\ref{eq:L1}) can be easily computed using standard convex programming techniques. 
It is known that in practice, $L_1$ regularization often leads to sparse solutions (although often suboptimal). Moreover,
its performance has been theoretically analyzed recently. 
For example, it is known from the compressed sensing literature (e.g., \cite{CandTao05-rip}) that
under certain conditions referred to as {\em restricted isometry property} (RIP), 
the solution of $L_1$ relaxation (\ref{eq:L1}) approximates the solution of the $L_0$ regularization problem (\ref{eq:L0}).
The prediction and parameter performance of this method has been considered in \cite{BunTsyWeg07,BiRiTs07,Koltchinskii08,ZhangHuang06,Zhang07-l1,GeerBuhlmann09-conditions}.
Exact support recovery was considered by various authors such as \cite{MeinBuh06,ZhaoYu06,Wainwright06}.
It is known that under some more restrictive conditions referred to as {\em irrepresentable conditions}, $L_1$ regularization can achieve exact recovery of the support set.
However, the $L_1$ regularization method (\ref{eq:L1}) does not achieve exact recovery of the support set under the RIP type of conditions, which we are interested in here. 

Although it is possible to achieve exact recovery using post-processing by thresholding the small coefficients of Lasso solution, 
this method is suboptimal under RIP in comparison to the $L_0$ regularization method (\ref{eq:L0})
because it requires the smallest nonzero coefficients to be $\sqrt{\bar{k}}$ times larger than the noise level instead of
only requiring the nonzero coefficients to be larger than the noise level with $L_0$ regularization in (\ref{eq:L0}). This issue, referred to as the {\em bias} of Lasso for feature selection, was extensively discussed in \cite{Zhang10-mc+}.
Detailed discussion can be found after Theorem~\ref{thm:multi-stage-featsel}.
It is worth mentioning that under a stronger mutual coherence condition (similar to irrepresentable condition), this post-processing step does not give this bias factor $\sqrt{\bar{k}}$ as shown in \cite{Lounici08} (also see \cite{Zhang07-l1}). Therefore the advantage of bias removal for the multi-stage procedure discussed here is only applicable when RIP holds but when the irrepresentable condition and mutual incoherence conditions fail.
A thorough discussion of various conditions is beyond the scope of the current paper, and
we would like to refer the readers to \cite{GeerBuhlmann09-conditions}.
Nevertheless,
it is worth pointing out that even in the classical $d<n$ setting with the design matrix $X$ being rank $d$, the irrepresentable condition or the mutual incoherence condition can still be violated while the RIP type sparse-eigenvalue condition used in this paper holds trivially. In fact, this was pointed out in \cite{Zou06} as the main motivation of adaptive Lasso. Adaptive Lasso behaves similarly to the above mentioned post-processing, and thus suffers from the same bias problem.

The bias of Lasso is due to the looseness of convex relaxation for $L_0$ regularization. Therefore the remedy is to use
a non-convex regularizer that is close to $L_0$ regularization. One drawback of using nonconvex optimization formulation is that
we can only find a local optimal solution and different computational procedure may lead to a different local solution. 
Therefore the theoretical analysis has to be integrated with specific computational procedure to show that the local minimum
obtained by the procedure has desirable properties (e.g., exact support recovery). 
Several nonconvex computational procedures have been analyzed in the literature,
including an adaptive forward backward greedy procedure (referred to as FoBa) 
to approximately solve the regularization method (\ref{eq:L0}) considered in
\cite{Zhang08-foba}, and the MC+ method in \cite{Zhang10-mc+} to solve a non-convex regularized problem using
a path-following procedure. Both methods can achieve unbiased feature selection. 

Related to the above mentioned work, a different procedure, referred to as {\em multi-stage convex relaxation}, was analyzed in
\cite{Zhang09-multistage}. This procedure solves a nonconvex problem using multiple stages of Lasso relaxations,
where convex formulations are iteratively refined based on solutions obtained from the previous stages. 
However, only parameter estimation performance was analyzed in \cite{Zhang09-multistage}. Unfortunately, the result
in  \cite{Zhang09-multistage} does not directly imply that multi-stage convex relaxation achieves unbiased recovery of the
support set. The purpose of this paper is to prove such a support recovery result analogous to related result in \cite{Zhang10-mc+}
(which is for a different procedure), and this result complements the parameter estimation result of \cite{Zhang09-multistage}.

\section{Multi-Stage Convex Relaxation with Capped-$L_1$ Regularization}
\label{sec:theory}

We are interested in recovering $\bar{\bw}$ from noisy observations $\by$ using the following nonconvex
regularization formulation:
\begin{equation}
\hat{\bw} = \arg\min_{\bw} \left[ \frac{1}{n}\| X \bw - \by \|_2^2 + \lambda \sum_{j=1}^p g(|\bw_j|) \right] , \label{eq:sparse-reg}
\end{equation}
where $g(|\bw_j|)$ is a regularization function. 
For simplicity, this paper only considers the specific regularizer
\begin{equation}
g(u) = \min(u,\theta) , \label{eq:capped-L1-reg}
\end{equation}
which is referred to as capped-$L_1$ regularization in \cite{Zhang09-multistage}. The parameter $\theta$ is a thresholding
parameter which says that we use $L_1$ penalization when a coefficient is sufficiently small, but the penalty does not increase
when the coefficient is larger than a threshold $\theta$. Detailed discussions can be found in \cite{Zhang09-multistage}.
Similar to \cite{Zhang09-multistage}, one can analyze general regularization function $g(u)$. However, some of such functions (such as adaptive Lasso) do not completely remove the bias. Therefore we only analyze the simple function (\ref{eq:capped-L1-reg}) in this paper for clarity. 
While a theoretical justification has been given in \cite{Zhang09-multistage} for multi-stage convex relaxation,
similar procedure has been shown to work well empirically without theoretical
justification \cite{CanWakBoy08,WipfNaga10}. 
Moreover, a two-stage version was proposed in \cite{ZouLi08}, which
does not remove the bias issue discussed in this paper.

Since the regularizer (\ref{eq:capped-L1-reg}) is nonconvex, the resulting optimization problem 
(\ref{eq:sparse-reg}) is a non-convex regularization problem. 
However the regularizer in (\ref{eq:capped-L1-reg}) is continuous and piecewise differentiable, and thus 
its solution is easier to compute than the $L_0$ regularization method in (\ref{eq:L0}). For example,  standard numerical
techniques such as sub-gradient descent lead to local minimum solutions.
Unfortunately, it is difficult to find the global optimum, and it is also
difficult to analyze the quality of the local minimum obtained from the gradient descent method.
As a matter of fact, results with non-convex regularization are difficult
to reproduce because different numerical optimization procedures can
lead to different local minima. Therefore the quality of the solution
heavily depend on the numerical procedure used. 

In the following, we consider a specific numerical procedure referred to as multi-stage convex relaxation 
in \cite{Zhang09-multistage}. 
The algorithm is given in Figure~\ref{fig:multi-stage-sparse}. 
The procedure converges to a local optimal solution of (\ref{eq:sparse-reg}) due to a simple concave duality argument,
where (\ref{eq:sparse-reg}) is rewritten as
\[
\hat{\bw} = \arg\min_{\bw} \min_{\{\lambda_j \geq 0\}}\left[ \frac{1}{n}\| X \bw - \by \|_2^2 + 
\sum_{j=1}^p \lambda_j |\bw_j| + \sum_{j=1}^p g^*(\lambda_j) \right] , 
\]
with $g^*(\lambda_j)=\max((\lambda-\lambda_j)\theta,0)$.
The procedure  of Figure~\ref{fig:multi-stage-sparse} can be regarded as an alternating optimization method to
solve this joint optimization problem of $\bw$ and $\{\lambda_j\}$, where the first step solves for $\bw$ with
$\{\lambda_j\}$ fixed, and the second step is the closed form solution of $\{\lambda_j\}$ with $\bw$ fixed.
A more detailed discussion can be found in \cite{Zhang09-multistage}.
Our goal is to show that this procedure
can achieve unbiased feature selection as described in \cite{Zhang10-mc+}.

\begin{figure}[ht]
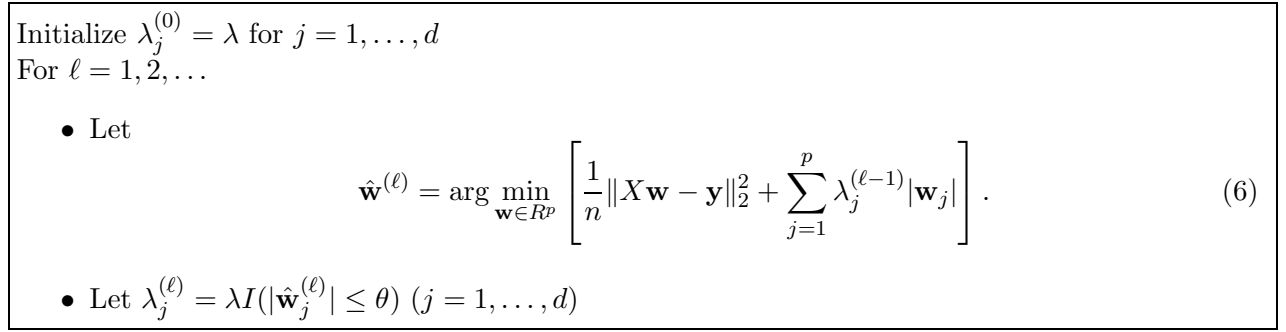

  \centering
  \begin{Sbox}
    \begin{minipage}{\linewidth}
      Initialize $\lambda_j^{(0)}=\lambda$ for $j=1,\ldots,d$ \\
      For $\ell=1,2,\ldots$
      \begin{itemize}
      \item Let
        \begin{equation}
          \hat{\bw}^{(\ell)} = \arg\min_{\bw \in R^p} \left[ \frac{1}{n}\|X \bw - \by\|_2^2 + 
            \sum_{j=1}^p  \lambda_j^{(\ell-1)} |\bw_j| \right] .
          \label{eq:convex-relax-L1}
        \end{equation}
    \item Let $\lambda_j^{(\ell)} = \lambda I(|\hat{\bw}_j^{(\ell)}| \leq \theta )$ ($j=1,\ldots,d$)
\end{itemize}
\end{minipage}
\end{Sbox}\fbox{\TheSbox}
\caption{Multi-stage Convex Relaxation for Sparse Regularization}
\label{fig:multi-stage-sparse}
\end{figure}

\section{Theoretical Analysis}

We require some technical conditions for our analysis.
First we assume sub-Gaussian noise as follows.
\begin{assumption}\label{assump:fixed}
Assume that
$\{\epsilon_i\}_{i=1,\ldots,n}$ in (\ref{eq:noise-def}) are independent (but not necessarily identically distributed) sub-Gaussians:
there exists $\sigma \geq 0$ such that $\forall i$ and $\forall t \in R$,
\[
\rE_{\epsilon_i} e^{t \epsilon_i} \leq e^{\sigma^2 t^2/2} .
\]
\end{assumption}
Both Gaussian and bounded random variables are sub-Gaussian
using the above definition. 
For example, if a random variable 
$\xi \in [a,b]$, then $\rE_\xi e^{t (\xi - \rE \xi)} \leq e^{(b-a)^2 t^2/8}$.
If a random variable is Gaussian: $\xi \sim N(0,\sigma^2)$, then
$\rE_\xi e^{t \xi} \leq e^{\sigma^2 t^2/2}$.

We also introduce the concept of sparse eigenvalue, which is standard in
the analysis of $L_1$ regularization. 
\begin{definition}
Given $k$, define
  \begin{align*}
  \rho_+(k)=&\sup \left\{\frac{1}{n}\|X \bw\|_2^2/\|\bw\|_2^2 : \|\bw\|_0 \leq k \right\} ,\\
  \rho_-(k)=&\inf \left\{\frac{1}{n}\|X \bw\|_2^2/\|\bw\|_2^2 : \|\bw\|_0 \leq k \right\} .
\end{align*}
\end{definition}

The following result for parameter estimation was obtained in \cite{Zhang09-multistage},
under the Assumption~\ref{assump:fixed}. If we assume that the target $\bar{\bw}$ is sparse, with
$\rE y_i = \bar{\bw}^\top \bx_i$, and $\bar{k}=\|\bar{\bw}\|_0$, and we
choose $\theta$ and $\lambda$ such that
\[
\lambda \geq 20 \sigma \sqrt{2\rho_+(1) \ln (2p/\eta)/n} 
\]
and 
\[
\theta \geq 9 \lambda /\rho_-(2\bar{k}+s) .
\]
Assume that
$\rho_+(s)/\rho_-(2\bar{k}+2s) 
\leq 1+ 0.5 s/\bar{k}$ for some $s \geq 2 \bar{k}$, then
with probability larger than $1-\eta$:
\begin{equation}
  \|\hat{\bw}^{(\ell)} - \bar{\bw}\|_2
  \leq 
  \frac{17}{\rho_-(2\bar{k}+s)}
  \left[ 2 \sigma \sqrt{\rho_+(\bar{k})} \left(\sqrt{\frac{7.4 \bar{k}}{n}} +
      \sqrt{\frac{2.7 \ln (2/\eta)}{n}} \right) 
+\lambda \sqrt{k_\theta} \right]
+ \frac{0.7^{\ell} \cdot \sqrt{\bar{k}} \lambda}{\rho_-(2 \bar{k}+s)} ,
\label{eq:multi-stage-param-est-bound}
\end{equation}
where $\hat{\bw}^{(\ell)}$ is the solution of
(\ref{eq:convex-relax-L1}), and
$k_\theta = \left|\{j \in \bar{F}: |\bar{\bw}_j| \leq 2\theta\}\right|$.

The condition   $\rho_+(s)/\rho_-(2\bar{k}+2s) \leq 1+ 0.5 s/\bar{k}$ 
requires the eigenvalue ratio $\rho_+(s)/\rho_-(s)$ to
grow sub-linearly in $s$. Such a condition, referred to as 
{\em sparse eigenvalue condition}, 
is also needed in the standard analysis of $L_1$ regularization
\cite{ZhangHuang06,Zhang07-l1}. It is related but  slightly weaker than the 
RIP condition in compressive sensing \cite{CandTao05-rip}, which requires the condition
\[
1 - \delta_{s'} \leq \rho_-(s') \leq \rho_+(s') \leq 1 + \delta_{s'} ,
\] 
for some $\delta_{s'} \in (0,1)$ and $s'>\bar{k}$. For example, with $s'=6\bar{k}$, and the restricted isometry constant
$\delta_{s'} \leq 1/3$, then the sparse eigenvalue condition above holds with $s=2\bar{k}$.
For simplicity, in this paper we do not make distinctions between RIP and sparse eigenvalue
condition.
Note that in the traditional low-dimensional statistical
analysis, one assumes that $\rho_+(s)/\rho_-(2\bar{k}+2s)< \infty$ as 
$s \to \infty$, which is significantly stronger than the condition we
use here. Although in practice it is often difficult to verify 
the sparse eigenvalue condition for real problems,
the parameter estimation result in (\ref{eq:multi-stage-param-est-bound})
nevertheless provides important theoretical
insights for multi-stage convex relaxation. 

For standard Lasso, we have the following bound 
\[
\|\hat{\bw}_{L_1}- \bar{\bw}\|_2 =O(\sqrt{k} \lambda) ,
\]
where $\hat{\bw}_{L_1}$ is the solution of the standard $L_1$ regularization.
This bound is tight for Lasso, 
in the sense that the right hand side cannot be improved
except for the constant---this
can be easily verified with an orthogonal design matrix. 
It is known that in order for Lasso to be effective, one has to
pick $\lambda$ no smaller than the order $\sigma \sqrt{\ln p/n}$.
Therefore, the parameter estimation error of the standard Lasso is of the order
$\sigma \sqrt{\bar{k} \ln p/n}$, which cannot be improved.

In comparison, if we consider the capped-$L_1$ regularization with
$g(|\bw_j|)$ defined in (\ref{eq:capped-L1-reg}), 
the bound in (\ref{eq:multi-stage-param-est-bound}) can
be significantly better
when most non-zero coefficients of $\bar{\bw}$ are
relatively large in magnitude. In the extreme case where $k_\theta = \left|\{j: |\bar{\bw}_j| \in (0,2\theta]\}\right|=0$,
which can be achieved when all nonzero components of $\bar{\bw}$ are larger than
the order $\sigma \sqrt{\ln p/n}$, we obtain the following better bound
\[
\|\hat{\bw}^{(\ell)}- \bar{\bw}\|_2 = O(\sqrt{\bar{k}/n} + \sqrt{\ln (1/\eta)/n})
\]
for the multi-stage procedure for a sufficiently large $\ell$ at the order of $\ln k + \ln \ln p$. 
This bound is superior to the standard one-stage $L_1$ regularization bound
$\|\hat{\bw}_{L_1}- \bar{\bw}\|_2 = O(\sqrt{\bar{k} \ln (p/\eta)/n})$.
 
In the literature, one is often interested in two types of results, one is parameter estimation 
bound as in (\ref{eq:multi-stage-param-est-bound}), and the other is feature selection consistency:
that is, to identify the set of nonzero coefficients of the truth. 
Although the parameter estimation bound in (\ref{eq:multi-stage-param-est-bound}) 
is superior to Lasso, the result does not imply that one can correctly select all variables under this condition.
Moreover, the specific proof presented in \cite{Zhang09-multistage} does not directly imply
such a result. Therefore it is important to know whether the multi-stage convex relaxation can 
achieve unbiased feature selection as studied in \cite{Zhang10-mc+}. 
In the following, we present such a result which supplements the parameter estimation bound
of (\ref{eq:multi-stage-param-est-bound}). While the main high-level argument
follows that of \cite{Zhang09-multistage}, 
there are many differences in the details, and hence a full proof (which is included in Section~\ref{apx:proof})
is still needed. This theorem is the main result of the paper.
It is worth mentioning that although we only consider the simple capped-$L_1$ regularizer, similar results
can be obtained for other regularizers (with virtually the same proof) such that $g'(u) \in [0, \infty)$, $g'(u)>0$ when $u$ belongs to a neighbor of $0$, and $g'(u)=0$ when $u \geq \theta$,
with a threshold $\theta>0$ appropriately chosen at the order of the noise level --- the condition of
$g'(u)=0$ when $u \geq \theta$ ensures the removal of feature selection ``bias'' of Lasso
which we discussed above.
As an example, very similar result can be obtained for
the MC+ penalty of \cite{Zhang10-mc+} or SCAD penalty of \cite{FanLi01}
using the multi-stage convex relaxation procedure here.
In fact, in practice there may be additional advantages of using a smooth nonconvex penalty such as
MC+ due to the extra smoothness, although such advantage is not revealed in our theoretical analysis.

\begin{theorem}\label{thm:multi-stage-featsel}
  Let Assumption~\ref{assump:fixed} hold.
  Assume also that the target $\bar{\bw}$ is sparse, with
  $\rE y_i = \bar{\bw}^\top \bx_i$, and $\bar{k}=\|\bar{\bw}\|_0$. 
  Let $\bar{F}=\Fr(\bar{\bw})$.
  Choose $\theta$ and $\lambda$ such that
  \[
  \lambda \geq 7 \sigma \sqrt{2\rho_+(1) \ln (2p/\eta)/n} 
  \]
  and
  \[
  \theta > 9 \lambda /\rho_-(1.5\bar{k}+s) .
  \]
  Assume that
  \[
  \min_{j \in \bar{F}} |\bar{\bw}_j| > 2 \theta
  \]
  and 
  $\rho_+(s)/\rho_-(1.5\bar{k}+2s) \leq 1+ 2s/(3\bar{k})$ for some $s \geq 1.5 \bar{k}$, then
  with probability larger than $1-\eta$:
  \[
  \Fr(\hat{\bw}^{(\ell)})=\Fr(\bar{\bw})
  \]
  when $\ell>L$, where $\hat{\bw}^{(\ell)}$ is the solution of (\ref{eq:convex-relax-L1}) and
  \[
  L = \left\lfloor \frac{0.5 \ln \bar{k} }{\ln (\rho_-(1.5\bar{k}+s)\theta/(6\lambda))}    \right\rfloor +1 .
  \]
\end{theorem}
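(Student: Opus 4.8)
The plan is to reduce exact support recovery to three ingredients: a high-probability noise bound, a per-stage error bound for the weighted Lasso subproblem (\ref{eq:convex-relax-L1}), and a geometric contraction that drives the estimation error below the threshold $\theta$, after which a primal--dual witness yields the exact support. First I would fix the event on which the argument is deterministic. Each coordinate $\frac{1}{n}\bx_j^\top\epsilon$ of the noise gradient is sub-Gaussian with parameter at most $\sigma^2\rho_+(1)/n$, because $\frac{1}{n}\|\bx_j\|_2^2\le\rho_+(1)$; a Gaussian tail bound together with a union bound over the $p$ coordinates shows that, with probability at least $1-\eta$, $\frac{2}{n}\|X^\top\epsilon\|_\infty\le 2\sigma\sqrt{2\rho_+(1)\ln(2p/\eta)/n}\le 2\lambda/7$, where the last step uses the hypothesis on $\lambda$. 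On the same event I would also record, for $j\notin\bar{F}$, the projected bound $\frac{2}{n}|\bx_j^\top P_{\bar{F}}^\perp\epsilon|\le 2\lambda/7$, where $P_{\bar{F}}$ is the orthogonal projection onto the column span of $X_{\bar{F}}$ and $P_{\bar{F}}^\perp=I-P_{\bar{F}}$; this costs only one more union bound and is exactly what will make the off-support constraints strict.

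The technical heart is a single-stage lemma for (\ref{eq:convex-relax-L1}) with weights $\lambda_j^{(\ell-1)}\in\{0,\lambda\}$. Writing $Z_{\ell-1}=\{j:\lambda_j^{(\ell-1)}=0\}$ and $m_{\ell-1}=|\{j\in\bar{F}:\lambda_j^{(\ell-1)}=\lambda\}|$, I would start from the KKT conditions of (\ref{eq:convex-relax-L1}) and proceed in two steps. Assuming inductively $|Z_{\ell-1}|\le\bar{k}+s$ (the base case $Z_0=\emptyset$ is immediate since $\lambda_j^{(0)}=\lambda$), a sparse-eigenvalue/energy argument invoking $\rho_+(s)/\rho_-(1.5\bar{k}+2s)\le 1+2s/(3\bar{k})$ bounds the number of spurious nonzeros, $|\Fr(\hat{\bw}^{(\ell)})\setminus\bar{F}|\le s$, so that $|\Fr(\hat{\bw}^{(\ell)})|\le\bar{k}+s$ and hence $|Z_\ell|\le\bar{k}+s$, closing the induction on the support size. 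Restricted strong convexity with constant $\rho_-(1.5\bar{k}+s)$ on this controlled support, with the effective bias signal being the penalty vector restricted to $\bar{F}$, then yields a bound of the form
\[
\Delta_\ell:=\|\hat{\bw}^{(\ell)}-\bar{\bw}\|_2\le \frac{C}{\rho_-(1.5\bar{k}+s)}\left(\sigma\sqrt{\frac{\bar{k}+s}{n}}+\lambda\sqrt{m_{\ell-1}}\right).
\]

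To turn this into a contraction I would use the signal-strength hypothesis: since $|\bar{\bw}_j|>2\theta$ for $j\in\bar{F}$, any true coordinate still penalized at stage $\ell$ carries error larger than $\theta$, so $m_{\ell-1}\theta^2\le\Delta_{\ell-1}^2$ and $\sqrt{m_{\ell-1}}\le\Delta_{\ell-1}/\theta$. This gives the affine recursion $\Delta_\ell\le a+b\,\Delta_{\ell-1}$ with noise floor $a=O(\sigma\sqrt{(\bar{k}+s)/n}/\rho_-(1.5\bar{k}+s))$ and factor $b=6\lambda/(\rho_-(1.5\bar{k}+s)\theta)<2/3$, where $\theta>9\lambda/\rho_-(1.5\bar{k}+s)$ pins down the constant. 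Unrolling from the first stage, where all weights are $\lambda$ so $m_0=\bar{k}$ and $\Delta_1=O(\lambda\sqrt{\bar{k}}/\rho_-(1.5\bar{k}+s))$, gives $\Delta_\ell\le a/(1-b)+b^{\ell-1}\Delta_1$; the definition of $L$ is calibrated through $\ln(1/b)=\ln(\rho_-(1.5\bar{k}+s)\theta/(6\lambda))$ and $0.5\ln\bar{k}=\ln\sqrt{\bar{k}}$ so that $b^{\,m-1}\sqrt{\bar{k}}\le 1$, and hence $\Delta_m<\theta$, for every $m\ge L$.

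Finally I would convert this into exact recovery. For $\ell>L$ we have $\ell-1\ge L$, so $\|\hat{\bw}^{(\ell-1)}-\bar{\bw}\|_\infty\le\Delta_{\ell-1}<\theta$ forces $|\hat{\bw}_j^{(\ell-1)}|>\theta$ for $j\in\bar{F}$ and $|\hat{\bw}_j^{(\ell-1)}|<\theta$ for $j\notin\bar{F}$; the reweighting step therefore sets $\lambda_j^{(\ell-1)}=0$ on $\bar{F}$ and $\lambda_j^{(\ell-1)}=\lambda$ off $\bar{F}$. With these weights (\ref{eq:convex-relax-L1}) decouples, and I would close with a witness: let $\tilde{\bw}$ be the unpenalized least-squares fit restricted to $\bar{F}$ (zero elsewhere). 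Its residual is $-P_{\bar{F}}^\perp\epsilon$, so stationarity on $\bar{F}$ holds exactly; the recorded bound $\frac{2}{n}|\bx_j^\top P_{\bar{F}}^\perp\epsilon|\le 2\lambda/7<\lambda$ makes the off-support subgradient inequalities strict; and the per-coordinate least-squares perturbation on $\bar{F}$ is of order $\lambda/\rho_-(\bar{k})$, far below $2\theta$, so every true coordinate stays nonzero. Strict dual feasibility then identifies $\tilde{\bw}$ as the unique minimizer $\hat{\bw}^{(\ell)}$, giving $\Fr(\hat{\bw}^{(\ell)})=\bar{F}$. I expect the main obstacle to be the single-stage lemma: one must bound the spurious support by $s$ and obtain the error bound with the sharp $1/\rho_-$ and $\lambda\sqrt{m_{\ell-1}}$ dependence simultaneously and under only the sparse-eigenvalue condition (no irrepresentable assumption), with constants tight enough that $b<2/3$.
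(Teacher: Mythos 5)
Your overall architecture (geometric contraction of the penalized part of the support, followed by a KKT/witness argument once the weights are exactly right) is close in spirit to the paper's, and your final witness step correctly identifies the limit as the restricted least-squares solution $\tilde{\bw}$ of (\ref{eq:ls}). But there is a genuine gap in the middle: you measure progress by $\Delta_\ell=\|\hat{\bw}^{(\ell)}-\bar{\bw}\|_2$, and your per-stage bound carries the additive noise floor $a=O\bigl(\sigma\sqrt{(\bar{k}+s)/n}\,/\rho_-(1.5\bar{k}+s)\bigr)$, so unrolling gives $\Delta_\ell\le a/(1-b)+b^{\ell-1}\Delta_1$ with a term $a/(1-b)$ that never contracts. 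You then need $\Delta_{\ell-1}<\theta$ to conclude that the reweighting step zeroes the weights exactly on $\bar{F}$ and nowhere else, but $\theta$ may be taken as small as $9\lambda/\rho_- = O\bigl(\sigma\sqrt{\ln(p)/n}\,/\rho_-\bigr)$, while $a/(1-b)=\Omega\bigl(\sigma\sqrt{\bar{k}/n}\,/\rho_-\bigr)$; these are compatible only when $\bar{k}=O(\ln p)$, which the theorem does not assume. The floor is not an artifact of loose bounding: even the oracle $\tilde{\bw}$ satisfies $\|\tilde{\bw}-\bar{\bw}\|_2=\Theta(\sigma\sqrt{\bar{k}/n})$, so no estimator can do better in $\ell_2$ distance to $\bar{\bw}$; the lossy step is bounding $\|\cdot\|_\infty$ by $\|\cdot\|_2$ relative to $\bar{\bw}$.

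The paper's fix is exactly the point it flags as its main technical departure from \cite{Zhang09-multistage}: track the error relative to $\tilde{\bw}$, not $\bar{\bw}$. With $\hat{\epsilon}=\frac{1}{n}X^\top(X\tilde{\bw}-\by)$ one has $\hat{\epsilon}_j=0$ for all $j\in\bar{F}$ (Lemma~\ref{lem:sub-Gaussian-infnorm}), so in the single-stage bound the noise enters only through $\|\hat{\epsilon}_{G^c\setminus\bar{F}}\|_2\le\sqrt{|F^{(\ell-1)}|}\,\|\hat{\epsilon}\|_\infty$, i.e.\ it is itself proportional to $\lambda\sqrt{|F^{(\ell-1)}|}$, where $F^{(\ell)}=\{j:|\hat{\bw}^{(\ell)}_j-\bar{\bw}_j|\ge\theta\}$. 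The recursion $\sqrt{|F^{(\ell)}|}\le\beta\sqrt{|F^{(\ell-1)}|}$ with $\beta=6\lambda\theta^{-1}/\rho_-<2/3$ is therefore purely multiplicative; the integer $|F^{(\ell)}|$ drops to $0$ after $L$ stages, one more application of the same lemma gives $\hat{\bw}^{(\ell)}=\tilde{\bw}$ exactly, and the bound $\|\tilde{\bw}-\bar{\bw}\|_\infty<\theta<\frac{1}{2}\min_{j\in\bar{F}}|\bar{\bw}_j|$ yields $\Fr(\tilde{\bw})=\bar{F}$ with no separate witness computation. If you rewrite your contraction with $\tilde{\bw}$ as the reference point (absorbing the small $\ell_\infty$ discrepancy between $\tilde{\bw}$ and $\bar{\bw}$ when counting still-penalized true coordinates), your plan goes through; as written it does not establish the theorem for general $\bar{k}$.
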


Theorem~\ref{thm:multi-stage-featsel} is the main result of this paper. If
\[
\min_{\bw_j \in \bar{F}} |\bw_j| \geq c \sigma\sqrt{\ln p/n}
\]
for a sufficiently large constant $c$ that is independent of $\bar{k}$
(but could depend on the RIP condition), 
then we can pick both parameters $\lambda=O(\sigma \sqrt{\ln p/n})$ and $\theta=O(\sigma \sqrt{\ln p/n})$ 
at the noise level, so that Theorem~\ref{thm:multi-stage-featsel} can be applied. 
In this case, Theorem~\ref{thm:multi-stage-featsel} implies that multi-stage capped-$L_1$ regularization achieves
exact recovery of the support set $\Fr(\bar{\bw})$.
In comparison, Lasso does not achieve exact sparse recovery under RIP conditions. While running Lasso followed by
thresholding small coefficients to zero (or using adaptive Lasso of \cite{Zou06} or the two-stage procedure of \cite{ZouLi08}) may achieve exact recovery, such a procedure
requires the condition that 
\begin{equation}
\min_{\bw_j \in \bar{F}} |\bw_j|  \geq c' \sigma \sqrt{\bar{k}\ln p/n} \label{eq:min-bias}
\end{equation}
for some constant $c'$ (also depends on the RIP condition).
This extra $\sqrt{\bar{k}}$ factor is referred to as the bias of the Lasso procedure in \cite{Zhang10-mc+}.
Moreover, it is known that for exact recovery to hold, the requirement of
$\min_{\bw_j \in \bar{F}} |\bw_j| \geq c \sigma \sqrt{\ln p/n}$
(up to a constant) is necessary for all statistical procedures, in the sense that
if $\min_{\bw_j \in \bar{F}} |\bw_j| \leq c' \sigma \sqrt{\ln p/n}$ for a sufficiently small constant $c'$ (under appropriate RIP 
conditions), then no statistical procedure can achieve exact recovery with large probability. 
Therefore statistical procedures that can achieve exact support recovery under (\ref{eq:min-bias}) are referred to as
(nearly) unbiased feature selection methods in \cite{Zhang10-mc+}. 
Theorem~\ref{thm:multi-stage-featsel} shows that multi-stage convex relaxation
with capped-$L_1$ regularization achieves unbiased feature selection.

Results most comparable to what we have obtained here are that of the FoBa procedure
in \cite{Zhang08-foba} and that of the MC+ procedure in
\cite{Zhang10-mc+}. Both can be regarded as (approximate) optimization methods for nonconvex formulations.
The former is a forward backward greedy algorithm, which does not
optimize (\ref{eq:sparse-reg}), while the latter is
a path-following algorithm for solving formulations similar to (\ref{eq:sparse-reg}).
Although results in \cite{Zhang10-mc+} are comparable to ours, we
should note that unlike our procedure, which is efficient due to the finite number of convex optimization, there 
is no proof showing that the path-following
strategy in \cite{Zhang10-mc+} is always efficient (in the sense that there may be
exponentially many switching points). 

\section{Simulation Study}

Numerical examples can be found in \cite{Zhang09-multistage} that demonstrate the advantage of multi-stage convex relaxation
over Lasso. Therefore we shall not repeat a comprehensive study. Nevertheless, this section presents a simple simulation
study to illustrate the theoretical results. 
The $n \times p$ design matrix $X$ is generated with iid random Gaussian entries and each column is normalized
with 2-norm $\sqrt{n}$. Here $n=100$ and $p=250$. We then generate a vector
$\bar{\bw}$ with $\bar{k}=30$ nonzero coefficients, and each nonzero coefficient is uniformly generated from the interval
$(1,10)$. The observation is $y=X \bar{\bw} + \epsilon$, where $\epsilon$ is zero-mean iid Gaussian noise with
standard deviation $\sigma=1$.
We study the feature selection performance of
Multi-stage convex relaxation method in Figure~\ref{fig:multi-stage-sparse}
using various configurations of $\lambda=\tau \sigma \sqrt{\ln(p)/n}$ (with $\tau=1,2,4,8,16,32$),
and $\theta=\mu \lambda$ for various constants $\mu=0.5,1,2,4$. 

The experiments are repeated for 100 times, and 
Table~\ref{tab:performance} reports the probability (percentage in the 100 runs) 
of exact support recovery for each configuration at various stages $\ell$. 
Note that $\ell=1$ corresponds to Lasso and $\ell=2$ is an adaptive Lasso like
two stage method \cite{Zou06,ZouLi08}. The main purpose of this study is to illustrate
that it is beneficial to use more than two stages, as predicted by our theory. 
However, since only $O(\ln(\bar{k}))$ is sufficient, optimal results can be achieved with relatively small number
of stages.
These conclusions can be clearly seen from Table~\ref{tab:performance}. Specifically the results for $\ell=2$ are better than
those of $\ell=1$ (standard Lasso), 
while results of $\ell=4$ are better than those of $\ell=2$. Although the performance of $\ell=8$ is even better,
the improve over $\ell=4$ is small at the optimal configuration of $\lambda$ and $\theta$. 
This is consistent with our theory, which implies that a relatively small number of stages is needed to
achieve good performance. 

\begin{table}[htp]
  \centering
  \begin{tabular}{|c|c|c|c|c|c|c|c|}
\hline
\multicolumn{7}{|c|}{$\theta=0.5 \lambda $} \\ \hline 
$\lambda$ & $ 0.23$ & $ 0.47$ & $ 0.94$ & $ 1.9$ & $ 3.8$ & $ 7.5$\\ \hline
$\ell=1$ & $  0$ & $  0$ & $  0$ & $  0$ & $  0$ & $  0$\\ \hline
$\ell=2$ & $  0$ & $  0$ & $ 0.02$ & $  0$ & $  0$ & $  0$\\ \hline
$\ell=4$ & $  0$ & $ 0.05$ & $ 0.63$ & $ 0.18$ & $  0$ & $  0$\\ \hline
$\ell=8$ & $  0$ & $ 0.12$ & $ 0.83$ & $ 0.25$ & $  0$ & $  0$\\ \hline
\hline
\multicolumn{7}{|c|}{$\theta= 1 \lambda $} \\ \hline 
$\lambda$ & $ 0.23$ & $ 0.47$ & $ 0.94$ & $ 1.9$ & $ 3.8$ & $ 7.5$\\ \hline
$\ell=1$ & $  0$ & $  0$ & $  0$ & $  0$ & $  0$ & $  0$\\ \hline
$\ell=2$ & $  0$ & $ 0.04$ & $ 0.15$ & $ 0.06$ & $  0$ & $  0$\\ \hline
$\ell=4$ & $  0$ & $ 0.33$ & $ 0.86$ & $ 0.13$ & $  0$ & $  0$\\ \hline
$\ell=8$ & $  0$ & $ 0.38$ & $ 0.93$ & $ 0.16$ & $  0$ & $  0$\\ \hline
\hline
\multicolumn{7}{|c|}{$\theta= 2 \lambda $} \\ \hline 
$\lambda$ & $ 0.23$ & $ 0.47$ & $ 0.94$ & $ 1.9$ & $ 3.8$ & $ 7.5$\\ \hline
$\ell=1$ & $  0$ & $  0$ & $  0$ & $  0$ & $  0$ & $  0$\\ \hline
$\ell=2$ & $  0$ & $ 0.14$ & $ 0.22$ & $  0$ & $  0$ & $  0$\\ \hline
$\ell=4$ & $  0$ & $ 0.29$ & $ 0.6$ & $ 0.02$ & $  0$ & $  0$\\ \hline
$\ell=8$ & $  0$ & $ 0.3$ & $ 0.62$ & $ 0.02$ & $  0$ & $  0$\\ \hline
\hline
\multicolumn{7}{|c|}{$\theta= 4 \lambda $} \\ \hline 
$\lambda$ & $ 0.23$ & $ 0.47$ & $ 0.94$ & $ 1.9$ & $ 3.8$ & $ 7.5$\\ \hline
$\ell=1$ & $  0$ & $  0$ & $  0$ & $  0$ & $  0$ & $  0$\\ \hline
$\ell=2$ & $  0$ & $ 0.01$ & $ 0.01$ & $  0$ & $  0$ & $  0$\\ \hline
$\ell=4$ & $  0$ & $ 0.06$ & $ 0.06$ & $  0$ & $  0$ & $  0$\\ \hline
$\ell=8$ & $  0$ & $ 0.06$ & $ 0.06$ & $  0$ & $  0$ & $  0$\\ \hline
  \end{tabular}
  \caption{Probability of Exact Support Recovery for Multi-stage Convex Relaxation}
  \label{tab:performance}
\end{table}

\section{Proof of Theorem~\ref{thm:multi-stage-featsel}}
\label{apx:proof}
The analysis is an adaptation of \cite{Zhang09-multistage}.
While the main proof structure is similar, there are nevertheless
subtle and important differences in the details, and hence a complete proof is still necessary. 
The main technical differences are as follows.
The proof of \cite{Zhang09-multistage} tracks the progress from one stage $\ell-1$ to the next
stage $\ell$ using a bound on 2-norm parameter estimate,
while in the current proof we track the progress using the set of variables that differ significantly from the true variables.
Moreover, in \cite{Zhang09-multistage}, we compare the current estimated parameter to the true
parameter $\bar{\bw}$, which is sufficient for parameter estimation. However, in order to establish 
feature selection result of this paper, it is necessary to compare the  current estimated parameter to 
the least squares solution $\tilde{\bw}$ within the true feature set $\bar{F}$ as defined below in (\ref{eq:ls}). 
These subtle technical differences mean that many details in the proofs presented below differ 
from that of  \cite{Zhang09-multistage}.

\subsection{Auxiliary lemmas}
 
We first introduce some definitions.
Consider the positive semi-definite matrix $A= n^{-1} X^\top X \in \R^{d \times d}$.
Given $s,k \geq 1$ such that $s + k \leq d$.
Let $I,J$ be disjoint subsets of $\{1,\ldots,d\}$
with $k$ and $s$ elements respectively.
Let $A_{I,I} \in R^{k \times k}$ be the restriction of $A$ to indices $I$, $A_{I,J} \in R^{k \times s}$ 
be the restriction of $A$ to indices $I$ on the left and $J$ on the right. 
Similarly we define restriction $\bw_I$ of a vector $\bw \in R^p$ on $I$;
and for convenience, we allow either $\bw_I \in R^k$ or $\bw_I \in R^p$ (where components not in $I$ are zeros) depending on the context.
 
We also need the following quantity in our analysis:
\[
\pi(k,s)=\sup_{\bv  \in R^{k}, \bu \in R^s ,I,J}
  \frac{\bv^\top A_{I,J} \bu \|\bv\|_2}{\bv^\top A_{I,I} \bv  \|\bu\|_\infty } .
\]
The following two lemmas are taken from \cite{Zhang07-l1}.
We skip the proof.
\begin{lemma}\label{lem:gamma-bound}
The following inequality holds:
\[
\pi(k,s) \leq 
\frac{s^{1/2}}{2} \sqrt{\rho_+(s)/\rho_-(k+s)-1} ,
\]
\end{lemma}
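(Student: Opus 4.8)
The plan is to turn the mixed bilinear quantity $\pi(k,s)$ into a purely spectral optimization, and then to extract the bound from the nonnegativity of an auxiliary scalar quadratic. First I would eliminate the $\ell_\infty$ norm in the denominator: since $\bu\in R^s$ obeys $\|\bu\|_2\le\sqrt{s}\,\|\bu\|_\infty$, for fixed $\bv,I,J$ we have
\[
\sup_{\bu}\frac{\bv^\top A_{I,J}\bu}{\|\bu\|_\infty}
\le \sqrt{s}\,\sup_{\bu}\frac{\bv^\top A_{I,J}\bu}{\|\bu\|_2}
= \sqrt{s}\,\|A_{J,I}\bv\|_2 ,
\]
the last equality coming from $A_{I,J}^\top=A_{J,I}$ and aligning $\bu$ with $A_{J,I}\bv$. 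Hence $\pi(k,s)\le \sqrt{s}\,M$, where $M=\sup_{\bv,I,J}\|A_{J,I}\bv\|_2\|\bv\|_2/(\bv^\top A_{I,I}\bv)$, and it suffices to prove $M\le\tfrac12\sqrt{\rho_+(s)/\rho_-(k+s)-1}$.

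Next, fix $\bv,I,J$, write $g=\|A_{J,I}\bv\|_2$, and take the unit vector $\bu=A_{J,I}\bv/g$ on $J$, so that $\bv^\top A_{I,J}\bu=g$. For every real $t$ the vector $\bz=\bv_I+t\,\bu_J$ is supported on the disjoint union $I\cup J$ and is therefore $(k+s)$-sparse, so $\bz^\top A\bz\ge\rho_-(k+s)\|\bz\|_2^2$ by the lower sparse-eigenvalue bound. Expanding both sides, moving everything to one side, and replacing the leading coefficient $\bu^\top A_{J,J}\bu-\rho_-(k+s)$ by its upper bound $\rho_+(s)-\rho_-(k+s)$ (which only increases the coefficient of $t^2\ge 0$ and hence preserves nonnegativity) gives, for all $t\in\R$,
\[
\bigl(\rho_+(s)-\rho_-(k+s)\bigr)\,t^2 + 2g\,t
+ \bigl(\bv^\top A_{I,I}\bv - \rho_-(k+s)\|\bv\|_2^2\bigr)\ge 0 .
\]

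The key step is then to read off the discriminant condition: a quadratic in $t$ that is nonnegative on all of $\R$ has nonpositive discriminant, which yields $g^2\le(\rho_+(s)-\rho_-(k+s))(\bv^\top A_{I,I}\bv-\rho_-(k+s)\|\bv\|_2^2)$. Writing $c=\rho_-(k+s)$ and $x=\bv^\top A_{I,I}\bv/\|\bv\|_2^2$ (which satisfies $x\ge\rho_-(k)\ge c$), this rearranges to
\[
\frac{g^2\|\bv\|_2^2}{(\bv^\top A_{I,I}\bv)^2}
\le (\rho_+(s)-c)\,\frac{x-c}{x^2}
\le \frac{\rho_+(s)-c}{4c} ,
\]
where the final inequality is the elementary bound $(x-c)/x^2\le 1/(4c)$, equivalent to $(x-2c)^2\ge 0$. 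Taking the supremum over $\bv,I,J$ and then a square root gives $M\le\tfrac12\sqrt{\rho_+(s)/\rho_-(k+s)-1}$, and combining with $\pi(k,s)\le\sqrt{s}\,M$ finishes the proof.

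I expect the only real subtlety to be the construction of the auxiliary vector $\bz=\bv_I+t\,\bu_J$ and the recognition that the induced inequality holds for all real $t$ (not merely $t\ge 0$), since it is precisely this two-sided validity that licenses the discriminant argument; everything else is the $\ell_\infty$-to-$\ell_2$ conversion that produces the factor $\sqrt{s}$ and a one-line scalar optimization that produces the factor $\tfrac12$.
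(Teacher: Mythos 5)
Your proof is correct: the $\ell_\infty$-to-$\ell_2$ conversion giving the $\sqrt{s}$ factor, the quadratic in $t$ built from $\bz=\bv_I+t\bu_J$ together with the discriminant argument, and the final scalar optimization $(x-c)/x^2\le 1/(4c)$ all check out (the only degenerate cases, $\rho_+(s)=\rho_-(k+s)$ or $\bv^\top A_{I,I}\bv=0$, are handled trivially). Note that the paper itself gives no proof of this lemma --- it is imported from the cited $L_1$-regularization reference, where the argument is of exactly this standard quadratic/sparse-eigenvalue type --- so your write-up is a faithful, self-contained reconstruction rather than a genuinely different route.
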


\begin{lemma}\label{lem:inner-prod}
Consider $k,s > 0$ and $G \subset \{1,\ldots,d\}$
such that $|G^c|=k$.
Given any $\bw \in R^p$.
Let $J$ be the indices of the $s$ largest components of
$\bw_G$ (in absolute values), and $I= G^c \cup J$.
Then
\[
\max(0,\bw_I^\top A \bw) \geq 
\rho_-(k+s) (\|\bw_I\|_2 - \pi(k+s,s)\|\bw_G\|_1/s) \|\bw_I\|_2 .
\]
\end{lemma}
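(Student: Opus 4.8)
The plan is to split $\bw_I^\top A \bw$ into a within-$I$ quadratic term and a cross term, bound the former below by $\rho_-(k+s)\|\bw_I\|_2^2$, and control the latter using $\pi(k+s,s)$ after a block decomposition of $\bw_G$. First I would write $\bw = \bw_I + \bw_{I^c}$ and observe that $I^c = G \setminus J$, since $I = G^c \cup J$ with $J \subseteq G$. This gives
\[
\bw_I^\top A \bw = \bw_I^\top A_{I,I} \bw_I + \bw_I^\top A \bw_{I^c} .
\]
Because $|I| = k+s$ and $\bw_I$ is supported on $I$, the definition of $\rho_-$ directly yields $\bw_I^\top A_{I,I}\bw_I \geq \rho_-(k+s)\|\bw_I\|_2^2$, so the whole task reduces to lower-bounding the cross term $\bw_I^\top A \bw_{I^c}$.

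Next I would partition $I^c = G \setminus J$ into consecutive blocks $J_1, J_2, \ldots$, each of size at most $s$, ordered so that the entries of $\bw$ on $J_1$ dominate those on $J_2$, and so on (in absolute value). For each block, applying the definition of $\pi(k+s,s)$ with the roles $\bv = \bw_I$ (size $k+s$) and $\bu = \bw_{J_t}$ (size $s$, padding the final short block with zeros) gives, in sign-free form,
\[
|\bw_I^\top A_{I,J_t}\bw_{J_t}|\,\|\bw_I\|_2 \leq \pi(k+s,s)\,(\bw_I^\top A_{I,I}\bw_I)\,\|\bw_{J_t}\|_\infty ,
\]
where the absolute value is legitimate because $\pi$ is invariant under $\bu \mapsto -\bu$. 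Summing over $t$ and using $\sum_t \bw_I^\top A_{I,J_t}\bw_{J_t} = \bw_I^\top A \bw_{I^c}$ bounds $|\bw_I^\top A \bw_{I^c}|\,\|\bw_I\|_2$ by the same right-hand side with $\sum_t \|\bw_{J_t}\|_\infty$ in place of a single block norm.

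The key quantitative step is the shelling estimate $\sum_t \|\bw_{J_t}\|_\infty \leq \|\bw_G\|_1 / s$. This exploits that $J$ holds the $s$ largest entries of $\bw_G$, so $\|\bw_{J_1}\|_\infty \leq \|\bw_J\|_1/s$, together with the block ordering, which gives $\|\bw_{J_{t+1}}\|_\infty \leq \|\bw_{J_t}\|_1/s$; telescoping then yields $\sum_t \|\bw_{J_t}\|_\infty \leq (\|\bw_J\|_1 + \|\bw_{G\setminus J}\|_1)/s = \|\bw_G\|_1/s$. Collecting the pieces produces
\[
\bw_I^\top A \bw \geq (\bw_I^\top A_{I,I}\bw_I)\Bigl(1 - \tfrac{\pi(k+s,s)\|\bw_G\|_1}{s\|\bw_I\|_2}\Bigr) .
\]
I would finish with a case split on the sign of the parenthesized factor: when it is nonnegative, multiplying $\bw_I^\top A_{I,I}\bw_I \geq \rho_-(k+s)\|\bw_I\|_2^2$ by this factor reproduces exactly the claimed right-hand side; when it is negative, the claimed right-hand side is itself negative while $\max(0,\bw_I^\top A\bw) \geq 0$, so the inequality holds trivially — which is precisely why the statement carries the $\max(0,\cdot)$.

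I expect the main obstacle to be the shelling estimate: one must order the blocks correctly and genuinely use that $J$ captures the $s$ largest coordinates of $\bw_G$, since it is this fact (rather than a naive per-block bound) that converts the sum of block $\ell_\infty$ norms into $\|\bw_G\|_1/s$ with \emph{no} leftover $\|\bw_{J_1}\|_\infty$ term. Two minor technical points are the zero-padding justification for applying $\pi(k+s,s)$ to the final short block, and the degenerate cases $\bw_I = 0$ or $\bw_I^\top A_{I,I}\bw_I = 0$, both of which are absorbed by the sign-free per-block inequality above, which remains valid as a $0 \le 0$ statement.
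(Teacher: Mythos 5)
Your proposal is correct: the decomposition $\bw_I^\top A \bw = \bw_I^\top A_{I,I}\bw_I + \bw_I^\top A \bw_{I^c}$, the per-block application of the definition of $\pi(k+s,s)$ (with the sign-flip and zero-padding caveats you note), the shelling bound $\sum_t \|\bw_{J_t}\|_\infty \leq \|\bw_G\|_1/s$ --- which, as you correctly emphasize, genuinely uses that $J$ carries the $s$ largest entries of $\bw_G$ --- and the final case split that accounts for the $\max(0,\cdot)$ are all sound, including the degenerate cases $\bw_I=0$ and $\bw_I^\top A_{I,I}\bw_I=0$. The paper itself skips this proof, importing the lemma from \cite{Zhang07-l1}; your argument is essentially the standard shelling proof from that reference, so there is no methodological divergence to report.
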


Our analysis requires us to keep track of progress with respect to the least squares solution $\tilde{\bw}$
with the true feature set $\bar{F}$, which we define below:
\begin{equation}
\tilde{\bw} = \arg\min_{\bw \in R^p} \|X \bw - \by\|_2^2  \qquad \text{subject to } \Fr(\bw) \subset \bar{F} , \label{eq:ls}
\end{equation}
where $\bar{F}=\Fr(\bar{\bw})$. 

The following lemmas require varying degrees of 
modifications from similar lemmas in \cite{Zhang09-multistage}, and thus the
proofs are included for completeness.
\begin{lemma} \label{lem:sub-Gaussian-infnorm}
  Define $\hat{\epsilon}= \frac{1}{n} X^\top (X \tilde{\bw}-\by)$. 
  Under the conditions of Assumption~\ref{assump:fixed}, 
  with probability larger than $1-\eta$:
  \[
  \forall j \in \bar{F}: \quad |\hat{\epsilon}_j|=0 , \quad
  |\tilde{\bw}_j- \bar{\bw}_j| \leq \sigma \sqrt{2 \rho_-(\bar{k})^{-1} \ln (2p/\eta)/ n}  ,
  \]
  and 
  \[
  \forall j \notin \bar{F}: \quad |\hat{\epsilon}_j|\leq \sigma \sqrt{2 \rho_+(1) \ln (2p/\eta)/ n}  .
  \]
\end{lemma}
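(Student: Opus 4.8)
The plan is to dispatch the three assertions separately, treating the first as a deterministic optimality condition and the other two as sub-Gaussian tail bounds for fixed linear functionals of the noise. Throughout I write $X_{\bar F}$ for the submatrix of columns of $X$ indexed by $\bar F$, I let $\bz=(\epsilon_1,\ldots,\epsilon_n)^\top$ denote the noise vector, and I use $\by = X_{\bar F}\bar{\bw}_{\bar F} + \bz$, which holds because $\Fr(\bar{\bw})\subset\bar F$. For the first claim, since $\tilde{\bw}$ solves (\ref{eq:ls}) with support confined to $\bar F$, its restriction is the ordinary least squares fit $\tilde{\bw}_{\bar F}=(X_{\bar F}^\top X_{\bar F})^{-1}X_{\bar F}^\top\by$, and the normal equations read $X_{\bar F}^\top(X\tilde{\bw}-\by)=0$. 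This is exactly $\hat{\epsilon}_j=0$ for every $j\in\bar F$, so no probabilistic argument is needed here.

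For the on-support estimation bound I would write the error as a linear functional of the noise: $\tilde{\bw}_{\bar F}-\bar{\bw}_{\bar F}=(X_{\bar F}^\top X_{\bar F})^{-1}X_{\bar F}^\top\bz$. Fixing $j\in\bar F$, the scalar $\tilde{\bw}_j-\bar{\bw}_j=\sum_i c_i\epsilon_i$ has coefficient vector of squared length $\be_j^\top(X_{\bar F}^\top X_{\bar F})^{-1}\be_j = n^{-1}\be_j^\top A_{\bar F,\bar F}^{-1}\be_j \le (n\,\rho_-(\bar k))^{-1}$, where the last step uses $A_{\bar F,\bar F}\succeq\rho_-(\bar k)I$, which is just the definition of $\rho_-$ applied to $\bar k$-sparse vectors supported on $\bar F$. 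By independence the moment generating function factorizes, so under Assumption~\ref{assump:fixed} this scalar is sub-Gaussian with variance proxy at most $\sigma^2/(n\rho_-(\bar k))$, and a Chernoff bound gives $\pr(|\tilde{\bw}_j-\bar{\bw}_j|>t)\le 2\exp(-t^2 n\rho_-(\bar k)/(2\sigma^2))$.

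For the off-support residual bound the key simplification is to note $X\tilde{\bw}=P_{\bar F}\by$, where $P_{\bar F}$ is the orthogonal projection onto the column span of $X_{\bar F}$; since $(I-P_{\bar F})X_{\bar F}=0$, the signal cancels and the residual collapses to $X\tilde{\bw}-\by=-(I-P_{\bar F})\bz$. Hence, for $j\notin\bar F$, $\hat{\epsilon}_j=-n^{-1}(X\be_j)^\top(I-P_{\bar F})\bz$ is again linear in $\bz$, with coefficient vector of squared length $n^{-2}(X\be_j)^\top(I-P_{\bar F})(X\be_j)\le n^{-2}\|X\be_j\|_2^2=n^{-1}A_{jj}\le\rho_+(1)/n$, using $0\preceq I-P_{\bar F}\preceq I$ together with the definition of $\rho_+(1)$ for a single column. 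The same sub-Gaussian argument then yields $\pr(|\hat{\epsilon}_j|>t)\le 2\exp(-t^2 n/(2\sigma^2\rho_+(1)))$.

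Finally I would calibrate $t$ in each tail so its right-hand side equals $\eta/p$; solving gives precisely the two stated thresholds, $\sigma\sqrt{2\rho_-(\bar k)^{-1}\ln(2p/\eta)/n}$ on $\bar F$ and $\sigma\sqrt{2\rho_+(1)\ln(2p/\eta)/n}$ off $\bar F$, and a union bound over the at most $p$ coordinates caps the total failure probability at $\eta$. The only genuinely nonroutine steps are the two variance-proxy estimates; the hard part will be the off-support bound, where the projection identity $X\tilde{\bw}-\by=-(I-P_{\bar F})\bz$ is what decouples the noise from the signal and lets the single-column eigenvalue $\rho_+(1)$ (rather than some larger sparse eigenvalue) control the residual.
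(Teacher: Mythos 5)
Your proposal is correct and follows essentially the same route as the paper: the projection/normal-equation identity gives $\hat\epsilon_{\bar F}=0$ deterministically, the residual collapses to $-(I-\tilde P)\bz$ so that $\rho_+(1)$ controls the off-support coefficients, the on-support error is the linear functional $e_j^\top(X_{\bar F}^\top X_{\bar F})^{-1}X_{\bar F}^\top\bz$ controlled by $\rho_-(\bar k)^{-1}$, and a sub-Gaussian Chernoff bound plus a union bound over the $p$ coordinates finishes it. The variance-proxy computations match the paper's exactly (your exponent for the on-support tail is in fact the correct one; the paper's displayed exponent contains a harmless typo with an extra factor of $n$).
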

\begin{proof}
Let $\tilde{P}$ be the projection matrix to the subspace spanned by columns of $X$ in $\bar{F}$, then
we know that
\[
X \tilde{\bw} = \tilde{P} \by
\]
and 
\[
(I-\tilde{P}) \rE \by = \rE \by -  X \bar{\bw} = 0 .
\]
Therefore for each $j$
\[
n |\hat{\epsilon}_j| =  |X_j^\top (X \tilde{\bw}-\by) | = 
|X_j^\top (I-\tilde{P}) (\by- \rE \by)) | .
\]
It implies that $\hat{\epsilon}_j=0$ if $j \in \bar{F}$.
Since for each $j$: the column $X_j$ satisfies
$\|X_j^\top (I-\tilde{P})\|_2^2 \leq n \rho_+(1)$, we have from sub-Gaussian tail bound that
for all $j \notin \bar{F}$ and $\epsilon>0$:
\[
P \left[ |\hat{\epsilon}_j| \geq \epsilon \right] \leq 2 \exp [ - n\epsilon^2/(2 \sigma^2 \rho_+(1)) ] .
\]
Moreover, for each $j \in \bar{F}$, we have
\[
|\tilde{\bw}_j-\bar{\bw}_j|= e_j^\top (X_{\bar{F}}^\top X_{\bar{F}})^{-1} X_{\bar{F}}^\top (\by - \rE \by) .
\]
Since $\|e_j^\top (X_{\bar{F}}^\top X_{\bar{F}})^{-1} X_{\bar{F}}^\top\|_2^2= e_j^\top (X_{\bar{F}}^\top X_{\bar{F}})^{-1} e_j
\leq n^{-1} \rho_-(\bar{k})^{-1}$, we have for all $\epsilon>0$:
\[
P \left[ |\tilde{\bw}_j-\bar{\bw}_j| \geq \epsilon \right] \leq 2 \exp [ - n \rho_-(\bar{k})\epsilon^2/(2 n\sigma^2) ] .
\]
Taking union bound for $j=1,\ldots,d$ (each with probability $\eta/d$) we obtain the desired inequality.
\end{proof}

\begin{lemma}
  \label{lem:L1-nonsparse-dr}
 Consider $G\subset \{1,\ldots,d\}$ such that $\bar{F}\cap G = \emptyset$. 
  Let $\hat{\bw}=\hat{\bw}^{(\ell)}$ be the solution of
  (\ref{eq:convex-relax-L1}), and let $\Delta \hat{\bw}= \hat{\bw}-\tilde{\bw}$.
  Let $\lambda_G = \min_{j \in G} \lambda_j^{(\ell-1)}$ 
  and  $\lambda_0 = \max_{j} \lambda_j^{(\ell-1)}$. 
  If $2\|\hat{\epsilon}\|_\infty\| < \lambda_G$, then
 \[
  \sum_{j \in G} |\hat{\bw}_j |
  \leq 
  \frac{2 \|\hat{\epsilon}\|_\infty}{\lambda_G - 2\|\hat{\epsilon}\|_\infty}
  \sum_{j \notin \bar{F}\cup G} |\hat{\bw}_j|
  +
  \frac{\lambda_0}{\lambda_G - 2\|\hat{\epsilon}\|_\infty}  \sum_{j \in \bar{F}} |\Delta \hat{\bw}_j|
  \leq   \frac{\lambda_0}{\lambda_G - 2\|\hat{\epsilon}\|_\infty}\|\Delta \hat{\bw}_{G^c}\|_1
  .
  \]
\end{lemma}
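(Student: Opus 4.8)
The plan is to invoke the first-order (subgradient) optimality condition for the convex stage-$\ell$ problem (\ref{eq:convex-relax-L1}) defining $\hat{\bw}=\hat{\bw}^{(\ell)}$, and to compare $\hat{\bw}$ against the restricted least squares solution $\tilde{\bw}$. Optimality guarantees subgradients $s_j \in \partial|\hat{\bw}_j|$ (so $s_j = \sgn(\hat{\bw}_j)$ when $\hat{\bw}_j \neq 0$ and $s_j \in [-1,1]$ otherwise) with $\frac{2}{n}X_j^\top(X\hat{\bw}-\by) = -\lambda_j^{(\ell-1)} s_j$ for every $j$. First I would rewrite the residual gradient in terms of $\Delta\hat{\bw}$ by inserting $X\tilde{\bw}$: with $A = n^{-1}X^\top X$ and $\hat{\epsilon}= n^{-1}X^\top(X\tilde{\bw}-\by)$, one has $\frac{1}{n}X^\top(X\hat{\bw}-\by) = A\,\Delta\hat{\bw} + \hat{\epsilon}$.

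Next I would take the inner product of the optimality identity with $\Delta\hat{\bw}$, obtaining $\sum_j \lambda_j^{(\ell-1)} s_j \Delta\hat{\bw}_j = -2\,\Delta\hat{\bw}^\top A\,\Delta\hat{\bw} - 2\,\Delta\hat{\bw}^\top\hat{\epsilon}$. Since $A$ is positive semi-definite, the quadratic term is non-negative and may be dropped to yield $\sum_j \lambda_j^{(\ell-1)} s_j \Delta\hat{\bw}_j \leq -2\,\Delta\hat{\bw}^\top\hat{\epsilon}$. The step requiring the most care — and the one I view as the crux — is exploiting Lemma~\ref{lem:sub-Gaussian-infnorm}, which gives $\hat{\epsilon}_j = 0$ for all $j \in \bar{F}$; the noise inner product therefore ranges only over coordinates outside $\bar{F}$, so $-2\,\Delta\hat{\bw}^\top\hat{\epsilon} \leq 2\|\hat{\epsilon}\|_\infty \sum_{j\notin\bar{F}}|\Delta\hat{\bw}_j|$, and since $\tilde{\bw}$ is supported on $\bar{F}$ we have $\Delta\hat{\bw}_j = \hat{\bw}_j$ for $j \notin \bar{F}$. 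Overlooking this vanishing would introduce a stray $2\|\hat{\epsilon}\|_\infty$ into the $\bar{F}$ term and destroy the clean coefficient $\lambda_0$ in the target bound.

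Then I would lower bound the left-hand side by splitting over three disjoint groups. For $j \in G$ we have $\tilde{\bw}_j = 0$, hence $\Delta\hat{\bw}_j = \hat{\bw}_j$ and $s_j\hat{\bw}_j = |\hat{\bw}_j|$, contributing at least $\lambda_G\sum_{j\in G}|\hat{\bw}_j|$; for $j \notin \bar{F}\cup G$ the same identities make each term $\lambda_j^{(\ell-1)}|\hat{\bw}_j| \geq 0$, which I simply discard; for $j \in \bar{F}$, using $|s_j|\leq 1$ and $\lambda_j^{(\ell-1)}\leq\lambda_0$ gives the lower bound $-\lambda_0\sum_{j\in\bar{F}}|\Delta\hat{\bw}_j|$. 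Combining with the right-hand side and expanding $\sum_{j\notin\bar{F}}|\Delta\hat{\bw}_j| = \sum_{j\in G}|\hat{\bw}_j| + \sum_{j\notin\bar{F}\cup G}|\hat{\bw}_j|$, I would move the $2\|\hat{\epsilon}\|_\infty\sum_{j\in G}|\hat{\bw}_j|$ term to the left and divide by $\lambda_G - 2\|\hat{\epsilon}\|_\infty > 0$ (positive by hypothesis) to obtain the first displayed inequality.

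For the second inequality I would use $\lambda_0 = \max_j \lambda_j^{(\ell-1)} \geq \lambda_G > 2\|\hat{\epsilon}\|_\infty$ to enlarge the coefficient $2\|\hat{\epsilon}\|_\infty$ of $\sum_{j\notin\bar{F}\cup G}|\hat{\bw}_j|$ up to $\lambda_0$; recognizing $\|\Delta\hat{\bw}_{G^c}\|_1 = \sum_{j\notin\bar{F}\cup G}|\hat{\bw}_j| + \sum_{j\in\bar{F}}|\Delta\hat{\bw}_j|$ (again via $\Delta\hat{\bw}_j = \hat{\bw}_j$ off $\bar{F}$) then merges both terms into $\frac{\lambda_0}{\lambda_G - 2\|\hat{\epsilon}\|_\infty}\|\Delta\hat{\bw}_{G^c}\|_1$, finishing the argument. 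The only genuine obstacles are bookkeeping ones: keeping the three coordinate groups straight and correctly using the noise vanishing on $\bar{F}$.
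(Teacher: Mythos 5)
Your proposal is correct and follows essentially the same route as the paper: both start from the subgradient optimality condition for (\ref{eq:convex-relax-L1}), test it against $\Delta\hat{\bw}$, drop the nonnegative quadratic term $2\Delta\hat{\bw}^\top A\Delta\hat{\bw}$, exploit $\hat{\epsilon}_{\bar{F}}=0$ from Lemma~\ref{lem:sub-Gaussian-infnorm} together with $\Delta\hat{\bw}_j=\hat{\bw}_j$ off $\bar{F}$, split the coordinates into $G$, $\bar{F}$, and the remainder, and rearrange; the second inequality is obtained in both cases from $2\|\hat{\epsilon}\|_\infty\leq\lambda_0$. You correctly identified the vanishing of $\hat{\epsilon}$ on $\bar{F}$ as the step that keeps the coefficient of the $\bar{F}$ term equal to $\lambda_0$, which is exactly how the paper's derivation proceeds.
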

\begin{proof}
  For simplicity, let $\lambda_j=\lambda_j^{(\ell-1)}$.
  The first order equation implies that 
  \[
  \frac{1}{n} \sum_{i=1}^n 2 (\bx_i^\top \hat{\bw} - y_i) \bx_{i,j}
  + \lambda_j \sgn(\hat{\bw}_j) = 0 ,
  \]
  where $\sgn(\bw_j)=1$ when $\bw_j>0$, $\sgn(\bw_j)=-1$ when $\bw_j<0$,
  and $\sgn(\bw_j) \in [-1,1]$ when $\bw_j=0$.
  This implies that for all $\bv \in \R^p$, we have
  \begin{equation}
    2 \bv^\top A \Delta \hat{\bw} 
    \leq - 2 \bv^\top \hat{\epsilon} 
    - \sum_{j=1}^p \lambda_j \bv_j \sgn(\hat{\bw}_j) . \label{eq:dr}
  \end{equation}
Now, let $\bv=\Delta \hat{\bw}$ in (\ref{eq:dr}), and notice that $\hat{\epsilon}_{\bar{F}}=0$, we obtain
\begin{align*}
0 \leq&  2 \Delta \hat{\bw}^\top A \Delta \hat{\bw} 
\leq 2 |\Delta \hat{\bw}^\top \hat{\epsilon} |
- \sum_{j=1}^p \lambda_j \Delta \hat{\bw}_j \sgn(\hat{\bw}_j) \\
\leq& 2 \|\Delta \hat{\bw}_{\bar{F}^c}\|_1 \|\hat{\epsilon} \|_\infty
- \sum_{j \in \bar{F}} \lambda_j \Delta \hat{\bw}_j \sgn(\hat{\bw}_j) 
- \sum_{j \notin \bar{F}} \lambda_j \Delta \hat{\bw}_j \sgn(\hat{\bw}_j) \\
\leq& 2 \|\Delta \hat{\bw}_{\bar{F}^c}\|_1 \|\hat{\epsilon} \|_\infty
+ \sum_{j \in \bar{F}} \lambda_j |\Delta \hat{\bw}_j|
- \sum_{j \notin \bar{F}} \lambda_j |\hat{\bw}_j| \\
\leq& \sum_{j \in G} (2\|\hat{\epsilon}\|_\infty -\lambda_G) |\hat{\bw}_j |
+ \sum_{j \notin G\cup \bar{F}} 2 \|\hat{\epsilon}\|_\infty |\hat{\bw}_j|
+ \sum_{j \in \bar{F}} \lambda_0 |\Delta \hat{\bw}_j| .
\end{align*}
By rearranging the above inequality, 
we obtain the first desired bound. The second inequality uses $2\|\hat{\epsilon}\|_\infty \leq \lambda_0$.
\end{proof}

\begin{lemma}\label{lem:L1-nonsparse3}
  Using the notations of Lemma~\ref{lem:L1-nonsparse-dr}, and
  let $J$ be the indices of the largest $s$ coefficients (in absolute value)
  of $\hat{\bw}_G$.
  Let $I= G^c \cup J$ and $k=|G^c|$.
  If   $0 \leq \lambda_0 /(\lambda_G -2\|\hat{\epsilon}\|_\infty) \leq 3$, then
  \[
 \|\Delta \hat{\bw}\|_2
 \leq (1+ (3 k/s)^{0.5}) \|\Delta \hat{\bw}_I\|_2 .
  \]
\end{lemma}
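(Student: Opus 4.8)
The plan is to split $\Delta\hat{\bw}$ into its restriction to $I$ and the complementary part, and to show that the complementary part is dominated by $\|\Delta\hat{\bw}_I\|_2$. Since $I$ and $I^c$ are disjoint index sets, the triangle inequality gives $\|\Delta\hat{\bw}\|_2 \le \|\Delta\hat{\bw}_I\|_2 + \|\Delta\hat{\bw}_{I^c}\|_2$, so it suffices to prove $\|\Delta\hat{\bw}_{I^c}\|_2 \le (3k/s)^{1/2}\|\Delta\hat{\bw}_I\|_2$. First I would identify $I^c = G\setminus J$; because $G\cap\bar{F}=\emptyset$ and $\tilde{\bw}$ is supported on $\bar{F}$, the vector $\tilde{\bw}$ vanishes on $I^c$, so $\Delta\hat{\bw}_{I^c}=\hat{\bw}_{G\setminus J}$. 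The whole task thus reduces to bounding the $2$-norm of the ``small'' coefficients of $\hat{\bw}_G$ (those outside the top $s$) in terms of $\|\Delta\hat{\bw}_I\|_2$.

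For this I would combine two standard ingredients. The first is a sorting (tail) estimate: since $J$ collects the $s$ largest entries of $\hat{\bw}_G$ in magnitude, each entry of $\hat{\bw}_{G\setminus J}$ is at most $s^{-1}\|\hat{\bw}_J\|_1$, which yields the product bound $\|\hat{\bw}_{G\setminus J}\|_2^2 \le s^{-1}\|\hat{\bw}_J\|_1\,\|\hat{\bw}_{G\setminus J}\|_1$. The second is Lemma~\ref{lem:L1-nonsparse-dr} together with the hypothesis $\lambda_0/(\lambda_G-2\|\hat{\epsilon}\|_\infty)\le 3$, which (noting $\|\hat{\bw}_G\|_1=\|\Delta\hat{\bw}_G\|_1$ because $\tilde{\bw}$ vanishes on $G$) gives $\|\hat{\bw}_G\|_1 \le 3\,\|\Delta\hat{\bw}_{G^c}\|_1$. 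Finally, since $G^c\subset I$ and $|G^c|=k$, Cauchy--Schwarz gives $\|\Delta\hat{\bw}_{G^c}\|_1 \le k^{1/2}\|\Delta\hat{\bw}_{G^c}\|_2 \le k^{1/2}\|\Delta\hat{\bw}_I\|_2$.

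Chaining these three estimates produces $\|\hat{\bw}_{G\setminus J}\|_2 \le (3k/s)^{1/2}\|\Delta\hat{\bw}_I\|_2$, and substituting into the triangle-inequality split yields the claim. The one point that needs care is the tracking of the numerical constant: the naive route $\|\hat{\bw}_{G\setminus J}\|_2 \le s^{-1/2}\|\hat{\bw}_G\|_1$ followed by $\|\hat{\bw}_G\|_1 \le 3\|\Delta\hat{\bw}_{G^c}\|_1$ loses a factor and would give only $(9k/s)^{1/2}$. To stay within the stated $(3k/s)^{1/2}$ I would instead keep the product form $s^{-1}\|\hat{\bw}_J\|_1\|\hat{\bw}_{G\setminus J}\|_1$ and apply AM--GM to the split $\|\hat{\bw}_J\|_1+\|\hat{\bw}_{G\setminus J}\|_1=\|\hat{\bw}_G\|_1$; this bounds $\|\hat{\bw}_{G\setminus J}\|_2$ by $\tfrac{3}{2}(k/s)^{1/2}\|\Delta\hat{\bw}_I\|_2$, and since $\tfrac{3}{2}\le 3^{1/2}$ the conclusion follows comfortably. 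This constant bookkeeping is the only (modest) obstacle; everything else is a direct assembly of the preceding lemmas and the disjoint-support structure of $I$ and $I^c$.
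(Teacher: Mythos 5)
Your proposal is correct and follows essentially the same route as the paper: identify $\Delta\hat{\bw}_{I^c}=\hat{\bw}_{G\setminus J}$, bound $\|\hat{\bw}_{G\setminus J}\|_2^2$ by $s^{-1}\|\hat{\bw}_J\|_1\|\hat{\bw}_{G\setminus J}\|_1$ via the sorting argument, invoke Lemma~\ref{lem:L1-nonsparse-dr} for $\|\hat{\bw}_G\|_1\leq 3\|\Delta\hat{\bw}_{G^c}\|_1$, and close with Cauchy--Schwarz over $G^c\subset I$. Your AM--GM step is just a repackaging of the paper's algebraic inequality $a(3b-a)\leq (3b/2)^2$, and both yield the same constant $3/2\leq\sqrt{3}$.
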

\begin{proof}
  Using
  $\lambda_0/(\lambda_G-2\|\hat{\epsilon}\|_\infty)
  \leq 3$,
  we obtain from Lemma~\ref{lem:L1-nonsparse-dr} 
  \[
  \|\hat{\bw}_G\|_1 \leq 3 \|\Delta \hat{\bw} - \hat{\bw}_G \|_1 . 
  \]
  Therefore
\begin{align*}
\|\Delta \hat{\bw}-\Delta \hat{\bw}_I\|_\infty \leq& \|\Delta \hat{\bw}_J\|_1/s \\
=& s^{-1}
[\|\Delta \hat{\bw}_G\|_1 - \|\Delta \hat{\bw}-\Delta \hat{\bw}_I\|_1] \\
\leq& 
s^{-1} [3\|\Delta \hat{\bw} - \hat{\bw}_G\|_1 - \|\Delta \hat{\bw}-\Delta \hat{\bw}_I\|_1],
\end{align*}
which implies that
\begin{align*}
\|\Delta \hat{\bw}-\Delta \hat{\bw}_I\|_2 \leq &
(\|\Delta \hat{\bw}-\Delta \hat{\bw}_I\|_1
\|\Delta \hat{\bw}-\Delta \hat{\bw}_I\|_\infty)^{1/2} \\
\leq& 
\left[\|\Delta \hat{\bw}-\Delta \hat{\bw}_I\|_1
(3\|\Delta \hat{\bw} - \hat{\bw}_G\|_1 - \|\Delta \hat{\bw}-\Delta \hat{\bw}_I\|_1)\right]^{1/2} s^{-1/2} \\
\leq& 
\left[
(3\|\Delta \hat{\bw} - \hat{\bw}_G\|_1/2)^2\right]^{1/2} s^{-1/2} \\
\leq& (3/2) s^{-1/2} \|\Delta \hat{\bw} - \hat{\bw}_G\|_1 \\
\leq& (3/2) s^{-1/2} \bar{k}^{1/2} \|\Delta \hat{\bw} - \hat{\bw}_G\|_2
\leq (3 k/s)^{1/2} \|\Delta \hat{\bw}_I\|_2 .
\end{align*}
The third inequality uses the simple algebraic inequality $a(3b-a) \leq (3b/2)^2$.
By rearranging this inequality, we obtain the desired bound.
Note that in the above derivation, we have used the fact that
$\bar{F} \cap G=\emptyset$, which implies that $\Delta \hat{\bw}_G=\hat{\bw}_G$, and thus
$\Delta \hat{\bw} - \hat{\bw}_G=\Delta\hat{\bw}_{G^c}$.
\end{proof}

\begin{lemma}\label{lem:L1-nonsparse1}
  Let the conditions of Lemma~\ref{lem:L1-nonsparse-dr} and Lemma~\ref{lem:L1-nonsparse3} hold, and
  let $k=|G^c|$.
  If $t=1-\pi(k+s,s) k^{1/2} s^{-1} \in (0,4/3)$, and
  $0 \leq \lambda_0/(\lambda_G -2\|\hat{\epsilon}\|_\infty) \leq (4-t)/(4-3t)$, then
\[
   \|\Delta \hat{\bw}\|_2
   \leq (1+ (3k/s)^{0.5}) \|\Delta \hat{\bw}_I\|_2
  \leq \frac{1+ (3k/s)^{0.5}}{t \rho_-(k+s)}
  \left[
 2 \|\hat{\epsilon}_{G^c}\|_2
+ \left(\sum_{j \in \bar{F}} (\lambda_j^{(\ell-1)})^2\right)^{1/2}\right] .
\]
\end{lemma}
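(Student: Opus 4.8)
The first inequality is precisely the conclusion of Lemma~\ref{lem:L1-nonsparse3}, whose hypotheses are among those assumed, so the entire task is to establish the second inequality bounding $\|\Delta\hat{\bw}_I\|_2$. The plan is to sandwich the quadratic form $\Delta\hat{\bw}_I^\top A \Delta\hat{\bw}$ between a lower bound from the restricted-eigenvalue machinery of Lemmas~\ref{lem:gamma-bound}--\ref{lem:inner-prod} and an upper bound from the first-order optimality relation (\ref{eq:dr}). Throughout I abbreviate $\lambda_j = \lambda_j^{(\ell-1)}$ and $r = \lambda_0/(\lambda_G - 2\|\hat{\epsilon}\|_\infty)$, and I record that since $\tilde{\bw}$ is supported on $\bar{F}$ and $\bar{F}\cap G=\emptyset$ we have $\Delta\hat{\bw}_G = \hat{\bw}_G$, so the sets $J$ and $I = G^c \cup J$ coincide with those of Lemma~\ref{lem:L1-nonsparse3}.

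For the lower bound I would apply Lemma~\ref{lem:inner-prod} with $\bw = \Delta\hat{\bw}$, which gives $\max(0,\Delta\hat{\bw}_I^\top A\Delta\hat{\bw}) \geq \rho_-(k+s)(\|\Delta\hat{\bw}_I\|_2 - \pi(k+s,s)\|\Delta\hat{\bw}_G\|_1/s)\|\Delta\hat{\bw}_I\|_2$. I then convert the $\ell_1$ term using the second inequality of Lemma~\ref{lem:L1-nonsparse-dr}, namely $\|\Delta\hat{\bw}_G\|_1 = \|\hat{\bw}_G\|_1 \leq r\|\Delta\hat{\bw}_{G^c}\|_1$, followed by Cauchy--Schwarz and $G^c \subset I$, yielding $\|\Delta\hat{\bw}_G\|_1 \leq r k^{1/2}\|\Delta\hat{\bw}_I\|_2$. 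Since $\pi(k+s,s)k^{1/2}/s = 1-t$ by the definition of $t$, the lower bound collapses to $\rho_-(k+s)(1-(1-t)r)\|\Delta\hat{\bw}_I\|_2^2$.

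For the upper bound I would take $\bv = \Delta\hat{\bw}_I$ in (\ref{eq:dr}) and split both the noise term and the subgradient term over the three pieces $\bar{F}$, $G^c\setminus\bar{F}$, and $J$. On $\bar{F}$ the noise vanishes ($\hat{\epsilon}_{\bar{F}}=0$ by Lemma~\ref{lem:sub-Gaussian-infnorm}) and the subgradient contributes at most $\sum_{j\in\bar{F}}\lambda_j|\Delta\hat{\bw}_j| \leq (\sum_{j\in\bar{F}}\lambda_j^2)^{1/2}\|\Delta\hat{\bw}_I\|_2$. On $G^c\setminus\bar{F}$ and on $J$ one has $\Delta\hat{\bw}_j = \hat{\bw}_j$, so the subgradient term equals $-\lambda_j|\hat{\bw}_j|\leq 0$; moreover on $J\subset G$ the accompanying noise term combines with $-\lambda_j|\hat{\bw}_j|$ into a non-positive quantity because $2|\hat{\epsilon}_j| \leq 2\|\hat{\epsilon}\|_\infty < \lambda_G \leq \lambda_j$. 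The only surviving noise contribution comes from $G^c\setminus\bar{F}$ and is at most $2\|\hat{\epsilon}_{G^c}\|_2\|\Delta\hat{\bw}_I\|_2$ (using again $\hat{\epsilon}_{\bar{F}}=0$). Collecting terms gives $2\Delta\hat{\bw}_I^\top A\Delta\hat{\bw} \leq \|\Delta\hat{\bw}_I\|_2[2\|\hat{\epsilon}_{G^c}\|_2 + (\sum_{j\in\bar{F}}\lambda_j^2)^{1/2}]$, and since the bracket is nonnegative the same bound holds for $2\max(0,\Delta\hat{\bw}_I^\top A\Delta\hat{\bw})$.

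Combining the two bounds and cancelling one factor of $\|\Delta\hat{\bw}_I\|_2$ yields $\|\Delta\hat{\bw}_I\|_2 \leq [2\rho_-(k+s)(1-(1-t)r)]^{-1}[2\|\hat{\epsilon}_{G^c}\|_2 + (\sum_{j\in\bar{F}}\lambda_j^2)^{1/2}]$, so it only remains to show that the hypothesis $r \leq (4-t)/(4-3t)$ forces $2(1-(1-t)r) \geq t$. I expect this final algebraic threshold to be the main (if modest) obstacle, and the reason the specific constant $(4-t)/(4-3t)$ appears: for $t\geq 1$ it is immediate since $(1-t)r\leq 0$ gives $2(1-(1-t)r)\geq 2 > t$, while for $t\in(0,1)$ it is equivalent to $r \leq (2-t)/(2(1-t))$, and cross-multiplying $(4-t)/(4-3t) \leq (2-t)/(2(1-t))$ reduces to $0\leq t^2$. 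With $2(1-(1-t)r)\geq t$ the prefactor becomes $[t\rho_-(k+s)]^{-1}$, completing the proof. The genuinely delicate part is the sign bookkeeping in the upper bound that lets only $\|\hat{\epsilon}_{G^c}\|_2$ survive rather than $\|\hat{\epsilon}_I\|_2$, by absorbing the noise on $J$ into the penalty via $2\|\hat{\epsilon}\|_\infty < \lambda_G$.
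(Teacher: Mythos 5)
Your proof is correct and follows essentially the same route as the paper's: a lower bound on $\max(0,\Delta\hat{\bw}_I^\top A\Delta\hat{\bw})$ via Lemma~\ref{lem:inner-prod} combined with the $\ell_1$ bound of Lemma~\ref{lem:L1-nonsparse-dr}, an upper bound from (\ref{eq:dr}) with $\bv=\Delta\hat{\bw}_I$ in which the noise on $J$ is absorbed into the penalty via $2\|\hat{\epsilon}\|_\infty<\lambda_G$, and the algebraic check that $r\leq(4-t)/(4-3t)$ yields the prefactor $1/(t\rho_-(k+s))$. The only (harmless) cosmetic differences are that you keep $\max(0,\cdot)$ throughout rather than splitting on the sign of $\Delta\hat{\bw}_I^\top A\Delta\hat{\bw}$, and you verify the threshold inequality in the equivalent form $r\leq(2-t)/(2(1-t))$ instead of the paper's $1-(1-t)(4-t)(4-3t)^{-1}\geq 0.5t$.
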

\begin{proof}
  Let $J$ be the indices of the largest $s$ coefficients (in absolute value)
  of $\hat{\bw}_G$,
  and $I= G^c \cup J$. 
The conditions of the lemma imply that
\begin{align*}
  \max(0,\Delta \hat{\bw}_I^\top A \Delta \hat{\bw}) 
\geq&
\rho_-(k+s)
  [\|\Delta \hat{\bw}_I\|_2 -  \pi(k+s,s) \|\hat{\bw}_G\|_1/s] \|\Delta \hat{\bw}_I\|_2 \\
\geq&
\rho_-(k+s)
  [1 -  (1-t)(4-t)(4-3t)^{-1} ] \|\Delta \hat{\bw}_I\|_2^2 \\
  \geq& 0.5 t \rho_-(k+s) \|\Delta \hat{\bw}_I\|_2^2 .
\end{align*}
In the above derivation,
the first inequality is due to Lemma~\ref{lem:inner-prod};
the second inequality is due to the conditions of this lemma plus Lemma~\ref{lem:L1-nonsparse-dr},
which implies that 
\[
\|\hat{\bw}_G\|_1 \leq 
\frac{\lambda_0}{\lambda_G
  -2\|\hat{\epsilon}\|_\infty} \|\hat{\bw}_{G^c}\|_1\leq 
\frac{\lambda_0}{\lambda_G
  -2\|\hat{\epsilon}\|_\infty} 
\sqrt{k}\|\hat{\bw}_{I}\|_2 ;
\]
and the last inequality follows from $1-(1-t)(4-t)(4-3t)^{-1} \geq 0.5 t$, which holds for
$t \in (0,4/3)$.

If $\Delta \hat{\bw}_I^\top A \Delta \hat{\bw} \leq 0$,
then the above inequality, together with Lemma~\ref{lem:L1-nonsparse3},
imply the lemma.
Therefore in the following, we can assume that 
\[
\Delta \hat{\bw}_I^\top A \Delta \hat{\bw}
  \geq 0.5 t \rho_-(k+s) \|\Delta \hat{\bw}_I\|_2^2 .
\]
Moreover, let $\lambda_j=\lambda_j^{(\ell-1)}$.
We obtain from (\ref{eq:dr}) with $\bv=\Delta \hat{\bw}_I$ the 
following:
\begin{align*}
&  2 \Delta \hat{\bw}_I^\top A \Delta \hat{\bw} 
\leq - 2 \Delta \hat{\bw}_I^\top \hat{\epsilon} 
    - \sum_{j \in I} \lambda_j \Delta \hat{\bw}_j \sgn(\hat{\bw}_j) \\
= & - 2 \Delta \hat{\bw}_I^\top \hat{\epsilon}_{G^c}
-  2 \Delta \hat{\bw}_I^\top \hat{\epsilon}_{G} 
- \sum_{j \in \bar{F}} \lambda_j \Delta \hat{\bw}_j \sgn(\hat{\bw}_j) 
-\sum_{j \in G} \lambda_j |\Delta \hat{\bw}_j| 
-\sum_{j \in \bar{F}^c \cap G^c} \lambda_j |\Delta \hat{\bw}_j| \\
\leq & 2 \|\Delta \hat{\bw}_I\|_2 \|\hat{\epsilon}_{G^c}\|_2
+2 \|\hat{\epsilon}_{G}\|_\infty 
\sum_{j \in G} |\Delta \hat{\bw}_j| 
+ \sum_{j \in \bar{F}} \lambda_j |\Delta \hat{\bw}_j| 
-\sum_{j \in G} \lambda_j |\Delta \hat{\bw}_j| \\
\leq & 2 \|\Delta \hat{\bw}_I\|_2 \|\hat{\epsilon}_{G^c}\|_2
+ (\sum_{j \in \bar{F}} \lambda_j^2)^{1/2} \|\Delta \hat{\bw}_I\|_2 .
\end{align*}
Note that the equality uses the fact that $G \subset \bar{F}^c$, and 
$\Delta \hat{\bw}_j \sgn(\hat{\bw}_j) =|\hat{\bw}_j|$ for $j \in \bar{F}^c$.
The last inequality uses the fact that $\forall j \in G$: $\lambda_j \geq \lambda_G \geq 2 \|\hat{\epsilon}_G\|_\infty$.
Now by combining the above two estimates, we obtain
\[
   \|\Delta \hat{\bw}_I\|_2
  \leq \frac{1}{t \rho_-(k+s)}
  \left[
 2 \|\hat{\epsilon}_{G^c}\|_2
+ (\sum_{j \in \bar{F}} \lambda_j^2)^{1/2} \right] .
\]
The desired bound follows from Lemma~\ref{lem:L1-nonsparse3}.
\end{proof}

\begin{lemma} \label{lem:lambda-bound}
 Let $\lambda_j= \lambda I(|{\bw}_j|\leq \theta)$ for some ${\bw} \in R^p$,
  then 
  \[
  \left(\sum_{j \in \bar{F}} \lambda_j^2\right)^{1/2}
  \leq  \lambda \sqrt{\sum_{j \in \bar{F}} I(|\bar{\bw}_j| \leq 2\theta)}
  + \lambda 
  \left|\{j \in \bar{F}: |\bar{\bw}_j-{\bw}_j|\geq \theta \}\right|^{1/2} .
  \]
\end{lemma}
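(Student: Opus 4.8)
The plan is to reduce everything to a counting argument over the indices in $\bar{F}$ and then invoke subadditivity of the square root. First I would observe that since $\lambda_j = \lambda I(|\bw_j| \leq \theta)$ takes only the values $0$ and $\lambda$, we have $\lambda_j^2 = \lambda^2 I(|\bw_j| \leq \theta)$, so that
\[
\sum_{j \in \bar{F}} \lambda_j^2 = \lambda^2 \left|\{j \in \bar{F}: |\bw_j| \leq \theta\}\right| ,
\]
and the left-hand side of the claim is exactly $\lambda \sqrt{\left|\{j \in \bar{F}: |\bw_j| \leq \theta\}\right|}$. Thus it suffices to bound the cardinality of the set $\{j \in \bar{F}: |\bw_j| \leq \theta\}$.

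The crux of the argument is a pointwise event inclusion driven by the triangle inequality, and it hinges on the factor-of-two gap between the threshold $\theta$ defining $\lambda_j$ and the threshold $2\theta$ appearing on the right. For any index $j$, suppose $|\bw_j| \leq \theta$ but $|\bar{\bw}_j| > 2\theta$; then
\[
|\bar{\bw}_j - \bw_j| \geq |\bar{\bw}_j| - |\bw_j| > 2\theta - \theta = \theta .
\]
Hence the event $\{|\bw_j| \leq \theta\}$ is contained in $\{|\bar{\bw}_j| \leq 2\theta\} \cup \{|\bar{\bw}_j - \bw_j| \geq \theta\}$, which gives the indicator bound
\[
I(|\bw_j| \leq \theta) \leq I(|\bar{\bw}_j| \leq 2\theta) + I(|\bar{\bw}_j - \bw_j| \geq \theta) .
\]

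Summing this inequality over $j \in \bar{F}$ yields
\[
\left|\{j \in \bar{F}: |\bw_j| \leq \theta\}\right| \leq \sum_{j \in \bar{F}} I(|\bar{\bw}_j| \leq 2\theta) + \left|\{j \in \bar{F}: |\bar{\bw}_j - \bw_j| \geq \theta\}\right| .
\]
To finish, I would take square roots and apply the elementary subadditivity $\sqrt{a+b} \leq \sqrt{a} + \sqrt{b}$ for nonnegative $a,b$, then multiply through by $\lambda$ to recover the stated bound. There is no genuine obstacle here: the entire content is the triangle-inequality inclusion above, and once that is in place the rest is routine. The only point worth stating carefully is that the argument would fail if the second threshold were $\theta$ rather than $2\theta$, since then the strict inequality $|\bar{\bw}_j| > 2\theta$ could not be weakened to buy the margin $|\bar{\bw}_j - \bw_j| \geq \theta$.
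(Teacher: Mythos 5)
Your proof is correct and follows essentially the same route as the paper's: both rest on the pointwise indicator inequality $I(|\bw_j|\leq\theta) \leq I(|\bar{\bw}_j|\leq 2\theta) + I(|\bar{\bw}_j-\bw_j|\geq\theta)$ obtained from the triangle inequality, followed by subadditivity of the square root (the paper phrases this last step as the 2-norm triangle inequality, which is the same thing here). Your contrapositive arrangement of the case analysis is a cosmetic difference only.
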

\begin{proof}
  By assumption, if $|\bar{\bw}_j-{\bw}_j| \geq \theta$, then
  \[
  I(|{\bw}_j| \leq \theta) \leq 1 \leq  I(|\bar{\bw}_j-{\bw}_j|\geq \theta) ;
  \]
  otherwise, 
  $I(|{\bw}_j| \leq \theta) \leq I(|\bar{\bw}_j| \leq 2\theta)$. It follows that
  the following inequality always holds:
  \[
  I(|{\bw}_j|\leq \theta) \leq
  I(|\bar{\bw}_j| \leq 2\theta) + I(|\bar{\bw}_j-{\bw}_j|\geq \theta) .
  \]
  The desired bound is a direct consequence of the above result and
  the 2-norm triangle inequality 
  \[
  (\sum_j (x_j + \Delta x_j)^2)^{1/2} \leq (\sum_j x_j^2)^{1/2} + (\sum_j \Delta x_j^2)^{1/2} .
  \]
\end{proof}

\begin{lemma} \label{lem:bound-ell=1}
  Define $F^{(\ell)} = \{j: |\hat{\bw}_j^{(\ell)} -\bar{\bw}_j| \geq \theta\}$.
  Under the conditions of  Theorem~\ref{thm:multi-stage-featsel}, we have
  for all $s \geq 2 \bar{k}$:
  \[
  \|\hat{\bw}^{(\ell)}- \tilde{\bw}\|_2
  \leq \frac{5.7 \lambda}{\rho_-(1.5\bar{k}+s)} \sqrt{|F^{(\ell-1)}|} ,
  \]
  and
  \[
  \sqrt{|F^{(\ell)}|}\leq  \frac{6 \lambda \theta^{-1}}{\rho_-(1.5\bar{k}+s)} \sqrt{|F^{(\ell-1)}|} .
  \]
\end{lemma}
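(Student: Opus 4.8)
The plan is to prove the two inequalities together, using the first ($\ell_2$ control of $\hat{\bw}^{(\ell)}-\tilde{\bw}$) as the engine and deriving the second (the geometric shrinkage of $|F^{(\ell)}|$) from it by a counting argument. The whole thing is really an induction on $\ell$ whose invariant is $|F^{(\ell-1)}|\le \bar{k}/2$; the second inequality is what propagates this invariant, since the theorem's choice $\theta>9\lambda/\rho_-(1.5\bar{k}+s)$ makes the contraction factor $6\lambda\theta^{-1}/\rho_-(1.5\bar{k}+s)$ strictly below $2/3$. The device that powers the first inequality is feeding the right set into Lemma~\ref{lem:L1-nonsparse1}: I would take $G=\{j\notin\bar{F}:|\hat{\bw}_j^{(\ell-1)}|\le\theta\}$ (and simply $G=\bar{F}^c$ at the first stage, where every $\lambda_j^{(0)}=\lambda$), so that $\lambda_G=\min_{j\in G}\lambda_j^{(\ell-1)}=\lambda$ while $\lambda_0=\max_j\lambda_j^{(\ell-1)}\le\lambda$. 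Since $\bar{\bw}_j=0$ off $\bar{F}$, one gets $G^c\subseteq\bar{F}\cup F^{(\ell-1)}$, so $k:=|G^c|\le\bar{k}+|F^{(\ell-1)}|$, and the invariant keeps $k\le1.5\bar{k}$ (the base case is immediate because $F^{(0)}=\bar{F}$ forces $G^c=\bar{F}$, $k=\bar{k}$). This $k\le1.5\bar{k}$ is precisely what lets me replace $\rho_-(k+s)$ by the stated $\rho_-(1.5\bar{k}+s)$ via monotonicity.

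Before invoking Lemma~\ref{lem:L1-nonsparse1} I would discharge its hypotheses. From Lemma~\ref{lem:sub-Gaussian-infnorm} together with $\lambda\ge7\sigma\sqrt{2\rho_+(1)\ln(2p/\eta)/n}$ we get $\|\hat{\epsilon}\|_\infty\le\lambda/7$, hence $\lambda_G-2\|\hat{\epsilon}\|_\infty\ge5\lambda/7$ and $\lambda_0/(\lambda_G-2\|\hat{\epsilon}\|_\infty)\le7/5$. Combining Lemma~\ref{lem:gamma-bound} with the eigenvalue-ratio assumption $\rho_+(s)/\rho_-(1.5\bar{k}+2s)\le1+2s/(3\bar{k})$ and $k\le1.5\bar{k}$, $s\ge2\bar{k}$ gives $\pi(k+s,s)k^{1/2}s^{-1}\le1/2$, so $t\ge1/2$ and the ratio $7/5$ stays at or below $(4-t)/(4-3t)$. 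Lemma~\ref{lem:L1-nonsparse1} then bounds $\|\hat{\bw}^{(\ell)}-\tilde{\bw}\|_2$ by $\frac{1+(3k/s)^{1/2}}{t\,\rho_-(k+s)}$ times $2\|\hat{\epsilon}_{G^c}\|_2+(\sum_{j\in\bar{F}}(\lambda_j^{(\ell-1)})^2)^{1/2}$, whose prefactor is at most $2.5/0.5=5$. For the bracket I note $\hat{\epsilon}$ vanishes on $\bar{F}$, so $\|\hat{\epsilon}_{G^c}\|_2\le(\lambda/7)\,|F^{(\ell-1)}\setminus\bar{F}|^{1/2}$; and since $\min_{j\in\bar{F}}|\bar{\bw}_j|>2\theta$ annihilates the first term of Lemma~\ref{lem:lambda-bound}, $(\sum_{j\in\bar{F}}(\lambda_j^{(\ell-1)})^2)^{1/2}\le\lambda\,|F^{(\ell-1)}\cap\bar{F}|^{1/2}$. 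Adding these two contributions over the disjoint index sets by Cauchy--Schwarz bounds the bracket by $\sqrt{(2/7)^2+1}\,\lambda\,|F^{(\ell-1)}|^{1/2}\approx1.04\,\lambda|F^{(\ell-1)}|^{1/2}$, and $5\times1.04\le5.7$ yields the first inequality.

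For the second inequality I would count the coordinates of $F^{(\ell)}$. Writing $\hat{\bw}_j^{(\ell)}-\bar{\bw}_j=\Delta\hat{\bw}_j+(\tilde{\bw}_j-\bar{\bw}_j)$ and using Lemma~\ref{lem:sub-Gaussian-infnorm} --- which gives $\tilde{\bw}_j=\bar{\bw}_j=0$ off $\bar{F}$ and $|\tilde{\bw}_j-\bar{\bw}_j|\le\delta:=\sigma\sqrt{2\rho_-(\bar{k})^{-1}\ln(2p/\eta)/n}$ on $\bar{F}$ --- every $j\in F^{(\ell)}$ forces $|\Delta\hat{\bw}_j|\ge\theta-\delta$. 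Hence $|F^{(\ell)}|(\theta-\delta)^2\le\|\Delta\hat{\bw}\|_2^2$, i.e. $|F^{(\ell)}|^{1/2}\le\|\hat{\bw}^{(\ell)}-\tilde{\bw}\|_2/(\theta-\delta)$. The parameter choices make $\delta$ negligible against $\theta$: chaining $\theta>9\lambda/\rho_-(1.5\bar{k}+s)$, $\lambda\ge7\sigma\sqrt{2\rho_+(1)\ln(2p/\eta)/n}$, and $\rho_-(\bar{k})\le\rho_+(1)$ gives $\delta<\theta/63$, so $\theta-\delta>(62/63)\theta$ and $5.7\cdot(63/62)<6$. Substituting the first inequality then gives the claimed contraction $|F^{(\ell)}|^{1/2}\le 6\lambda\theta^{-1}\rho_-(1.5\bar{k}+s)^{-1}|F^{(\ell-1)}|^{1/2}$.

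The main obstacle, I expect, is not any single estimate but the bookkeeping that keeps the three relevant index sets --- $\bar{F}$, the false positives $F^{(\ell-1)}\setminus\bar{F}$, and the poorly estimated true coordinates $F^{(\ell-1)}\cap\bar{F}$ --- straight while threading tight numerical constants through Lemma~\ref{lem:L1-nonsparse1}: its hypotheses hold with essentially no slack (for instance $7/5$ versus $(4-t)/(4-3t)$ at $t=1/2$), so the invariant must deliver $k\le1.5\bar{k}$ exactly rather than merely $k\le2\bar{k}$. The genuinely new step relative to \cite{Zhang09-multistage} is the passage in the second inequality from the ``true-parameter'' set $F^{(\ell)}$ to the ``least-squares'' residual $\Delta\hat{\bw}=\hat{\bw}^{(\ell)}-\tilde{\bw}$, which is exactly what makes the $\delta\ll\theta$ comparison load-bearing and is the reason the least-squares solution $\tilde{\bw}$ is introduced at all.
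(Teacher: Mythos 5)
Your proposal is correct and follows essentially the same route as the paper's proof: the same choice of $G=\{j\notin\bar F:\lambda_j^{(\ell-1)}=\lambda\}$ fed into Lemma~\ref{lem:L1-nonsparse1} with $\lambda_0=\lambda_G=\lambda$, the same use of Lemma~\ref{lem:lambda-bound} with the first term annihilated by $\min_{j\in\bar F}|\bar\bw_j|>2\theta$, the same counting step via $\|\tilde\bw-\bar\bw\|_\infty\le\lambda/(7\sqrt{\rho_+(1)\rho_-(\bar k)})\ll\theta$, and the same induction maintaining $|G^c|\le1.5\bar k$ through the geometric decay of $|F^{(\ell)}|$. The only differences are immaterial constants (e.g.\ your prefactor $2.5$ versus the paper's $1+\sqrt3$, and $62/63$ versus $41/42$), both of which land within the stated $5.7$ and $6$.
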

\begin{proof}
For all $t \in [0.5,4/3)$, by using Lemma~\ref{lem:sub-Gaussian-infnorm}, we know that
the condition of the theorem implies that 
\[
\frac{\lambda}
{\lambda - 2 \|\hat{\epsilon}\|_\infty}
\leq 7/5 \leq \frac{4-t}{4-3t} .
\]
Moreover, Lemma~\ref{lem:gamma-bound} implies that the condition
\[
0.5 \leq t=1 - \pi(1.5\bar{k}+s,s) (1.5\bar{k})^{0.5} /s
\]
is also satisfied.
This means that the conditions of Lemma~\ref{lem:L1-nonsparse1} (with $\lambda_0=\lambda_G=\lambda$)
are satisfied. 

Now, we assume that at some $\ell \geq 1$, 
\begin{equation}
|G_\ell^c| \leq 1.5 \bar{k}, \quad \text{where } G_\ell=\{j \notin \bar{F}:
\lambda_j^{(\ell-1)} =\lambda \}  , \label{eq:G}
\end{equation}
then it is easy to verify that $G_\ell^c \setminus \bar{F} \subset F^{(\ell-1)}$.

Moreover, with the definition of $G=G_\ell$
in Lemma~\ref{lem:L1-nonsparse1} 
and Lemma~\ref{lem:lambda-bound}, 
we can set $\lambda_0=\lambda_G=\lambda$ and obtain (note also that $\hat{\epsilon}_{\bar{F}}=0$)
\begin{align*}
   \|\hat{\bw}^{(\ell)}-\tilde{\bw}\|_2
  \leq& \frac{1+\sqrt{3}}{t \rho_-(1.5\bar{k}+s)}
\left[ 2\|\hat{\epsilon}_{G_\ell^c \setminus\bar{F}}\|_2 +
\left(\sum_{j \in \bar{F}} (\lambda_j^{(\ell-1)})^2\right)^{1/2}  \right]\\
\leq& \frac{1+\sqrt{3}}{t \rho_-(1.5\bar{k}+s)}
\left[ 2 \sqrt{| F^{(\ell-1)}\setminus \bar{F}|}  \|\hat{\epsilon}\|_\infty +
\sqrt{|F^{(\ell-1)}\cap \bar{F}|} \lambda  \right]\\
\leq& \frac{1+\sqrt{3}}{t \rho_-(1.5\bar{k}+s)}
\left[ (2/7) \sqrt{| F^{(\ell-1)}\setminus \bar{F}|}  +
\sqrt{|F^{(\ell-1)}\cap \bar{F}|}  \right] \lambda \\
 \leq& \frac{1+\sqrt{3}}{0.5 \rho_-(1.5\bar{k}+s)}
\left[ \sqrt{1.082 |F^{(\ell-1)}|} \right]\lambda  \\
\leq& \frac{5.7 \lambda}{\rho_-(1.5\bar{k}+s)} \sqrt{|F^{(\ell-1)}|} ,
\end{align*}
where the first inequality is due to Lemma~\ref{lem:L1-nonsparse1}. 
The second inequality uses the facts that $G_\ell^c \setminus \bar{F} \subset F^{(\ell-1)} \setminus \bar{F}$, 
and Lemma~\ref{lem:lambda-bound} with $I(|\bar{\bw}_j| \leq 2\theta)=0$ (for all $j \in \bar{F}$).
The third inequality uses $2\|\hat{\epsilon}\|_\infty \leq (2/7)\lambda$, and the
fourth inequality uses $(2/7)a + b \leq \sqrt{1.082(a^2+b^2)}$.

Since Lemma~\ref{lem:sub-Gaussian-infnorm} implies that
\[
\|\tilde{\bw}-\bar{\bw}\|_\infty \leq (1/7) \lambda /\sqrt{\rho_+(1) \rho_-(\bar{k})} ,
\]
we know that $j \in F^{(\ell)}$ implies that
\[
|\tilde{\bw}_j-\hat{\bw}^{(\ell)}_j|\geq \theta - (1/7) \lambda /\sqrt{\rho_+(1) \rho_-(\bar{k})} \geq (41/42) \theta .
\]
Therefore
\begin{align*}
\sqrt{|F^{(\ell)}|}
\leq& (41\theta/42)^{-1} \|\tilde{\bw}-\hat{\bw}^{(\ell)}\|_2 \\
\leq& \frac{5.7 \lambda (41\theta/42)^{-1}}{\rho_-(1.5\bar{k}+s)} \sqrt{|F^{(\ell-1)}|} \\
\leq& \frac{6 \lambda \theta^{-1}}{\rho_-(1.5\bar{k}+s)}  \sqrt{|F^{(\ell-1)}|} .
\end{align*}
That is, under the assumption of (\ref{eq:G}), the lemma holds at $\ell$.

Therefore next we only need to prove by induction on $\ell$  that
(\ref{eq:G}) holds for all $\ell=1, 2,\ldots$.
When $\ell=1$, we have $G_1^c=\bar{F}$, which implies that (\ref{eq:G}) holds.

Now assume that (\ref{eq:G}) holds at $\ell$ for some $\ell \geq 1$. 
Then by the induction hypothesis we know that the lemma holds at $\ell$.
This means that
\begin{align*}
\sqrt{|G_{\ell+1}^c \setminus \bar{F}|} \leq & \sqrt{|F^{(\ell)}|} \\
\leq& \frac{6 \lambda \theta^{-1}}{\rho_-(1.5\bar{k}+s)} \sqrt{|F^{(\ell-1)}|} \\
\leq& \sqrt{0.5 |F^{(\ell-1)}|} \\
\leq& \cdots \leq 0.5^{\ell/2} |F^{(0)}| .
\end{align*}
The first inequality is due to the fact $G_{\ell+1}^c \setminus \bar{F} \subset F^{(\ell)}$.
The second inequality uses the assumption of $\theta$ in the theorem.
The last inequality uses induction. Now note that $F^{(0)}=\bar{F}$, we thus have
$|G_{\ell+1}^c \setminus \bar{F}| \leq 0.5\bar{k}$.
This completes the induction step.
\end{proof}

\subsection{Proof of Theorem~\ref{thm:multi-stage-featsel}}
Define 
\[
\beta=\frac{6 \lambda \theta^{-1}}{\rho_-(1.5\bar{k}+s)} ,
\]
We have $\beta<1$ by the assumption of the theorem.
Using induction, we have from Lemma~\ref{lem:bound-ell=1} that
\begin{align*}
  \sqrt{|F^{(L)}|}\leq&  \beta \sqrt{|F^{(L-1)}|}  \\
\leq& \cdots \\
\leq& \beta^{L} \sqrt{|F^{(0)}|} \\
\leq& \beta^{L} \sqrt{\bar{k}} < 1 .
\end{align*}
This means that when $\ell > L$, $|F^{(\ell-1)}|=0$.
Therefore by applying Lemma~\ref{lem:bound-ell=1} again we obtain
\[
  \|\hat{\bw}^{(\ell)}- \tilde{\bw}\|_2 =0 .
\]
Since Lemma~\ref{lem:sub-Gaussian-infnorm} implies that
\[
\|\tilde{\bw}-\bar{\bw}\|_\infty \leq (1/7) \lambda /\sqrt{\rho_+(1) \rho_-(\bar{k})} < \theta ,
\]
we have
\[
\Fr(\tilde{\bw})=\Fr(\bar{\bw}) .
\]
This implies that $\Fr(\hat{\bw}^{(\ell)})=\Fr(\bar{\bw})$.

\section{Discussion}

This paper investigated the performance of multi-stage convex relaxation for feature selection, where it is shown that
under RIP, the procedure can achieve unbiased feature selection.
This result complements that of \cite{Zhang09-multistage} which studies the parameter estimation performance 
of multi-stage convex relaxation.
It also complements similar results obtained in \cite{Zhang08-foba} and \cite{Zhang10-mc+} for different 
computational procedures. One advantage of our result over that in \cite{Zhang10-mc+} is that 
the multi-stage convex relaxation method is provably efficient because the correct feature set can be obtained
after no more than
$O(\log \bar{k})$ number of iterations. In comparison, a computational efficiency statement for the path-following method
of  \cite{Zhang10-mc+} remains open.

\bibliographystyle{plain}
\bibliography{L1,learning,pubj}

\end{document}